\newcounter{quote}
\renewenvironment{proof}[1]{{\large \bfseries Proof #1}}{\qed}
\newcommand{\hypref}[1]{\textbf{H~\ref{#1}}}
\newcommand{\textVersion}[2]
{\ifthenelse{\boolean{CONF} }{#1}{}\ifthenelse{\boolean{ARXIV}}{#2}{}}
\def\blind#1{
\ifthenelse{\boolean {ShowAuthors}}{#1}{}
}
\newcommand{\vs}{\ensuremath{v_{\rm s}}}
\newcommand{\vg}{\ensuremath{v_{\rm t}}}
\newcommand{\graph}{\ensuremath{\calG}}
\newcommand{\edgeSet}{\ensuremath{\calE}}
\newcommand{\vertexSet}{\ensuremath{\calV}}
\newcommand{\vertex}[0]{\ensuremath v}
\newcommand{\edge}[0]{\ensuremath e}
\newcommand{\vertu}[0]{\ensuremath u}
\newcommand{\Path}[0]{\ensuremath \sigma \xspace}
\newcommand{\weight}[0]{\ensuremath w}
\newcommand{\Eeval}{\ensuremath{\calE_{\rm eval}}\xspace}
\newcommand{\Vexp}{\ensuremath{\calV_{\rm rwr}}\xspace}
\newcommand{\costEval}[0]{\ensuremath c_e}
\newcommand{\costExp}[0]{\ensuremath c_r}
\newcommand{\costTotal}[0]{\ensuremath C}
\newcommand\algname[1]{\texttt{#1}\xspace}
\newcommand\lwastar{\algname{LWA*}}
\newcommand\astar{\algname{A*}}
\newcommand\lazySP{\algname{LazySP}}
\newcommand\lazyPRM{\algname{LazyPRM}}
\newcommand{\lrastar}{\algname{LRA*}}
\newcommand{\glrastar}{\algname{GLS}}
\newcommand{\world}[0]{\ensuremath \phi}
\newcommand{\ALG}[0]{\textsc{Alg}\xspace}
\newcommand{\Event}[0]{\textsc{Event}\xspace}
\newcommand{\Selector}[0]{\textsc{Selector}\xspace}
\newcommand{\eventShortestPath}[0]{\textsc{ShortestPath}\xspace}
\newcommand{\eventPathExistence}[0]{\textsc{SubpathExistence}\xspace}
\newcommand{\eventConstantDepth}[0]{\textsc{ConstantDepth}\xspace}
\newcommand{\eventHeuristicProgress}[0]{\textsc{HeuristicProgress}\xspace}
\newcommand{\selectorForward}[0]{\textsc{Forward}\xspace}
\newcommand{\selectorAlternate}[0]{\textsc{Alternate}\xspace}
\newcommand{\selectorGreedy}[0]{\textsc{FailFast}\xspace}
\newcommand{\hmin}{h_{\text{min}}}
\newcommand{\pmax}{p_{\text{max}}}
\newcommand{\EevalLSP}{\ensuremath{\calE_{\rm eval, LSP}}\xspace}
\newcommand{\VexpLSP}{\ensuremath{\calV_{\rm rwr, LSP}}\xspace}
\newcommand{\EevalGLS}{\ensuremath{\calE_{\rm eval, GLS}}\xspace}
\newcommand{\VexpGLS}{\ensuremath{\calV_{\rm rwr, GLS}}\xspace}
\newcommand{\LazyTree}{\ensuremath{\calT_{\rm lazy}}\xspace}
\newcommand{\Subpath}{\ensuremath{\Path_{\rm sub}}\xspace}
\newcommand{\glsspf}{\algname{SP + F}}
\newcommand{\glsspa}{\algname{SP + A}}
\newcommand{\glsspff}{\algname{SP + FF}}
\newcommand{\glscdf}{\algname{CD + F}}
\newcommand{\glscda}{\algname{CD + A}}
\newcommand{\glscdff}{\algname{CD + FF}}
\newcommand{\glssef}{\algname{SE + F}}
\newcommand{\glssea}{\algname{SE + A}}
\newcommand{\glsseff}{\algname{SE + FF}}
\DeclareMathOperator*{\argmin}{arg\,min}
\newcommand{\argminprob}[1]{\underset{#1}{\argmin}}
\DeclareMathOperator*{\suchthat}{\;\; \mbox{s.t.} \;\;}
\newcommand{\abs}[1]{\left|#1 \right|}
\newcommand{\pair}[2]{\left( #1, #2\right)}
\newcommand{\seq}[2]{\left(#1_{1}, #1_{2}, \ldots, #1_{#2}\right)}
\newcommand{\expect}[2]{\mathbb{E}_{#1}\left[#2\right]}
\newcommand{\calE}{\ensuremath{\mathcal{E}}\xspace}
\newcommand{\calG}{\ensuremath{\mathcal{G}}\xspace}
\newcommand{\calT}{\ensuremath{\mathcal{T}}\xspace}
\newcommand{\calV}{\ensuremath{\mathcal{V}}\xspace}
\newcommand{\R}{\mathbb{R}} 
\newtheorem{thm}{Theorem}[section]
\newtheorem{hypothesis}{H}
\newtheorem{cor}{Corollary}[section]
\newtheorem{definition}{Definition}[section]
\newcommand{\ignore}[1]{}
\newcommand{\vectorp}{\mathbf{p}}
\definecolor{orange}{rgb}{1,0.5,0}
\g@addto@macro{\@algocf@init}{\SetKwInOut{Parameter}{Parameters}} 
\begin{document}
%
\title{\huge{Generalized Lazy Search for Robot Motion Planning: \\ Interleaving Search and Edge Evaluation via Event-based Toggles}}
\author{Aditya Mandalika \\ University of Washington \\ \texttt{adityavk@cs.uw.edu}
\thanks{
This work was (partially) funded by the National Institute of Health R01 (\#R01EB019335), National Science Foundation CPS (\#1544797), National Science Foundation NRI (\#1637748), the Office of Naval Research, the RCTA, Amazon, and Honda.} 
\And
Sanjiban Choudhury \\ University of Washington \\ \texttt{sanjibac@cs.uw.edu}
\footnotemark[1]
\And 
Oren Salzman \\ Carnegie Mellon University \\ \texttt{osalzman@andrew.cmu.edu}
\footnotemark[1]
\And
Siddhartha Srinivasa \\ University of Washington \\ \texttt{siddh@cs.uw.edu}
\footnotemark[1]}
\maketitle
\begin{abstract}
Lazy search algorithms can efficiently solve problems where edge evaluation is the bottleneck in computation, as is the case for robotic motion planning.
The optimal algorithm in this class, \lazySP, lazily restricts edge evaluation to only the shortest path. 
Doing so comes at the expense of search effort, i.e., \lazySP must \emph{recompute the search tree} every time an edge is found to be invalid.  
This becomes prohibitively expensive when dealing with large graphs or highly cluttered environments. 
Our key insight is the need to balance both edge evaluation and search effort to minimize the total planning time. 
Our contribution is two-fold. 
First, we propose a framework, Generalized Lazy Search (\glrastar), that seamlessly toggles between search and evaluation to prevent wasted efforts.   
We show that for a choice of toggle, \glrastar is provably more efficient than \lazySP.
Second, we leverage prior experience of edge probabilities to derive \glrastar policies that minimize expected planning time.
We show that \glrastar equipped with such priors significantly outperforms competitive baselines for many simulated environments in $\mathbb{R}^2, SE(2)$ and 7-DoF manipulation.
\end{abstract}

\section{Introduction}
\label{sec:introduction}

We focus on the problem of finding the shortest path on a graph while minimizing total planning time. 
This is critical in applications such as robotic motion planning~\cite{L06}, where collision-free paths must be computed in real time.
A typical search algorithm expands a wavefront from the start, evaluating edges discovered until it finds the shortest feasible path to the goal.
The planning time then becomes the sum of the time spent in two phases -- \emph{search effort} and \emph{edge evaluation}. 
While edge evaluation is generally more expensive in motion planning~\cite{hauser15lazy}, the \emph{actual ratio of these times varies} with problem instances and graph sizes. 
Our goal is to design a framework of algorithms that let us balance this trade-off. 

\begin{figure}[!tb]
\centering
\includegraphics[width=\columnwidth]{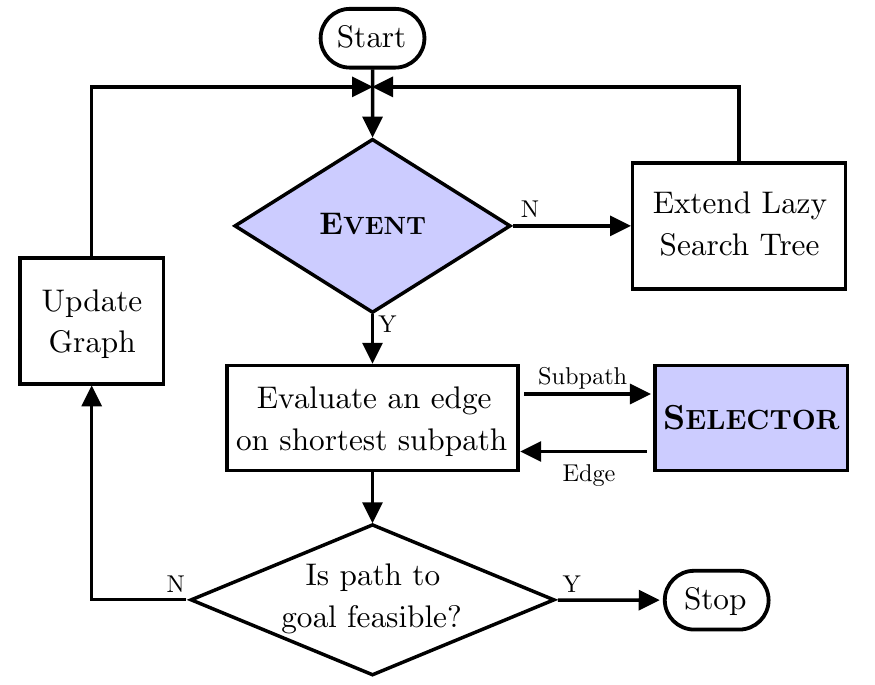}
\caption{The Generalized Lazy Search (\glrastar) framework with two parameters - \Event and \Selector (blue)}
\label{fig:intro}
\end{figure}

Unfortunately, current shortest path algorithms do not provide a framework flexible enough to traverse the pareto curve between search effort and edge evaluation.
On one end of the spectrum, \astar{} and its variants~\cite{HNR68,partialA*,korf1985} evaluate edges as soon as they are discovered. 
Hence although \astar{} is optimal in terms of \emph{search effort}, it is at the cost of excessive edge evaluations.
On the other hand, \lazySP{}~\cite{DS16} amongst other lazy search techniques~\cite{lazyPRM,CPL14,hauser15lazy}, expands the search wavefront all the way to the goal before evaluating edges. 
Hence \lazySP{} is optimal in terms of edge evaluation but has to replan everytime an edge is invalidated. 

\begin{figure*}[!t]
    \centering
    \includegraphics[width=\textwidth]{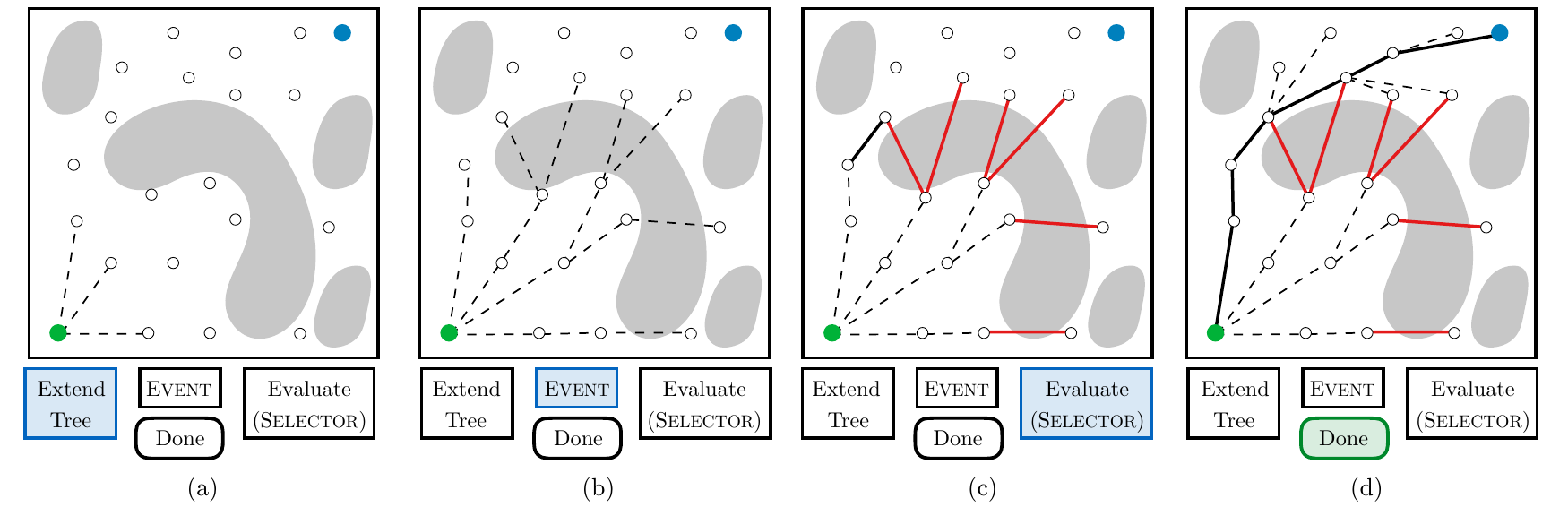}
    \caption{Mechanics of the \glrastar framework (Algorithm~\ref{alg:GLS}) for an ideal \Event and \Selector combination.}
  \label{fig:illustration_algorithm}
\end{figure*}%


In this work, we propose a framework for \emph{algorithmically toggling} between search effort and edge evaluation.  We are guaranteed to find the shortest path as long as the following holds true; the search tree must always be repaired to be consistent, and edge evaluation must be restricted to the shortest subpath in the tree. 
 Our framework, \emph{Generalized Lazy Search} (\glrastar), has two modules - \Event and \Selector (Fig.~\ref{fig:intro}). The algorithm expands a lazy search tree without evaluating any edges till the \Event is triggered. A \Selector is then invoked to evaluate an edge on the shortest subpath in the lazily expanded search tree. We show that by choosing different \Event and \Selector pairs, we can recover several existing lazy search algorithms such as \lazySP~\cite{DS16}, \lwastar~\cite{CPL14} and \lrastar~\cite{Mandalika18}. 

What constitutes an optimal trade-off and can this be captured by \glrastar?
Consider the ideal scenario, one with an omniscient oracle~\cite{haghtalab2017provable} that knows ahead of time which edges are valid or invalid.   
In fact, the oracle can compute the minimal set of invalid edges $\mathcal{I}$ that must be invalidated to arrive at the shortest feasible path.
How can we utilize such an oracle in \glrastar?
A simple strategy is as follows; as the search wavefront expands from start to goal, the oracle monitors the new edges that are discovered and triggers an \Event if it belongs to  $\mathcal{I}$. A \Selector then evaluates that edge. 
This minimizes edge evaluation and curtails wasted search effort. 

This insight extends to the more practical setting where we have \emph{priors on edge validity} that are learned from experience. 
We derive \Event and \Selector that minimize the expected planning time. 
This produces behaviors similar to the omniscient oracle (Fig.~\ref{fig:illustration_algorithm}); the search proceeds until the \Event is triggered due to the appearance of low probability edges on the current subpath; the \Selector then selects these edges to invalidate the subpath; and the process continues until the shortest feasible path is found. 


We make the following contributions:
\begin{enumerate}
\item We propose a class of algorithms, \glrastar (Section~\ref{sec:framework}), that minimize computational effort, defined as a function of both edge evaluation and vertex rewiring (Section~\ref{sec:problem_formulation}). 
\item We recover different lazy search algorithms as instantiations of \glrastar. We further prove that one such instantiation is edge optimal and causes fewer rewires than \lazySP (Section~\ref{sec:framework}, Theorem \ref{theorem:optimal}).
\item We derive instantiations of \glrastar that exploit the availability of edge priors to minimize expected computational effort (Section~\ref{sec:approach}, Theorem~\ref{theorem:greedy_selector}).
\item We show that \glrastar informed with edge priors can outperform competitive baselines on a spectrum of planning domains (Section~\ref{sec:experiments}).
\end{enumerate}

\section{Related Work}
\label{sec:related_work}

Graphs lend powerful tractability to robotic motion planning~\cite{L06}. They can be explicit, i.e., constructed as part of a pre-processing stage~\cite{kavraki96prm,KF11,JSCP15}, or implicit, i.e., discovered incrementally during search~\cite{likhachev2004ara,GSB15,SH15}. 

\astar~\cite{HNR68} and its variants have enjoyed widespread success in finding the shortest path with an optimal number of vertex expansions. However, in domains where edge evaluations are expensive and dominate the planning time, a \emph{lazy approach} is often employed~\cite{lazyPRM,hauser15lazy,kim2018adaptive}. In this approach, the graph is constructed \emph{without} testing if edges are collision-free. Only a subset of edges are evaluated to save computation time. \lazySP~\cite{DS16} extends the graph up to the goal before checking edges. \lwastar~\cite{CPL14} extends the graph a single step before evaluation. \lrastar~\cite{Mandalika18} trades off these approaches, allowing the search to proceed to an arbitrary lookahead. We generalize this further by introducing an event-based toggle. 

Several works have explored the use of priors in search. FuzzyPRM~\cite{nielsen2000two} evaluates paths that minimize the probability of collision. The Anytime Edge Evaluation (AEE*) framework~\cite{narayanan2017heuristic} uses an anytime strategy for edge evaluation informed by priors. POMP~\cite{CDS16} defines surrogate objectives using priors to improve anytime planning. BISECT~\cite{SC17} and DIRECT~\cite{SC18} cast search as Bayesian active learning to derive edge evaluation. E-graphs~\cite{Phillips-RSS-12} uses priors in heuristics. We focus on using priors to find the shortest path while minimizing expected planning time.  

Several alternate approaches speed up planning by creating efficient data structures~\cite{BF16}, modeling belief over the configuration space~\cite{huh2016learning}, sampling vertices in promising regions~\cite{bialkowski2013free,burns2005sampling} or using specialized hardware \cite{murray2016robot}. Other approaches forego optimality and computing near-optimal paths~\cite{SH16,SPARS}. Our work also draws inspiration from approaches that interleave planning and execution, such as \textsf{LRTA*}~\cite{KORF1990189} and \textsf{LSS-LRTA*}~\cite{koenig2009}.

\section{Problem Formulation}
\label{sec:problem_formulation}

Our goal is to design an algorithm that can solve the Single Source Shortest Path (SSSP) problem while minimizing computational effort. We begin with the SSSP problem.
Let $\graph = \pair{\vertexSet}{\edgeSet}$ be a graph, where $\vertexSet$ denotes the set of vertices and $\edgeSet$ the set of edges. Given a pair of source and target vertices $\pair{\vs}{\vg} \in \vertexSet$, a path $\Path$ is represented as a sequence of vertices $\seq{\vertex}{l}$ such that $\vertex_1 = \vs, \vertex_l = \vg, \forall i,~\pair{\vertex_i}{\vertex_{i+1}} \in \edgeSet$. 
We define a \emph{world} $\world: \edgeSet \rightarrow \{0,1\}$ as a mapping from edges to valid ($1$) or invalid ($0$). A path is said to be \emph{feasible} if all edges are valid, i.e., $\forall \edge \in \Path, \world(\edge) = 1$. 
Let $\weight: \edgeSet \rightarrow \R^+$ be the length of an edge. The length of a path is the sum of edge costs, i.e., $\weight(\Path) = \sum_{\edge \in \Path} \weight(\edge)$. The objective of the SSSP problem is to find the shortest feasible path:
\begin{equation}
	\min_{\Path} \; \weight(\Path) \suchthat{ \forall \edge\in\Path, \world(\edge) = 1 }
\end{equation}


Given an SSSP, we define a shortest path algorithm $\ALG(\graph, \vs, \vg, \world)$ that takes as input the graph $\graph$, the source-target pair $\pair{\vs}{\vg}$, and the underlying world $\world$. The algorithm typically solves the problem by building, verifying and rewiring a shortest path tree from source to target. 

Maintaining the search tree and verifying the shortest feasible path are primarily characterized by two atomic operations: edge evaluation and vertex rewiring.

\begin{definition}[Edge Evaluation]
The operation of querying the world $\world(\edge)$ to check if an edge $\edge$ is valid.
\end{definition}

\begin{definition}[Vertex Rewiring]
The operation of finding and assigning a new parent for a vertex $\vertu$ when an invalid edge is discovered.
\end{definition}

The algorithm returns three terms, i.e, $\Path^*,~\Eeval,~\Vexp = \ALG(\graph, \vs, \vg, \world)$. Here, $\Path^*$ is the shortest feasible path, $\Eeval$ is the \emph{set} of edges evaluated during the search, and $\Vexp$ is the \emph{multiset}\footnote{$\Vexp$ is a multiset since a vertex can potentially be rewired multiple times during the planning cycle.} of vertices rewired. \ALG ensures the following certificate:
\begin{enumerate}
	\item Returned path $\Path^*$ is verified to be feasible, i.e., $\forall \edge \in \Path^*,~\edge \in \Eeval,~\world(\edge) = 1$
	\item All paths shorter than $\Path^*$ are verified to be infeasible, i.e., $\forall \Path_i,~\weight(\Path_i) \leq \weight(\Path^*),~\exists \edge \in \Path_i,~\edge \in \Eeval,~\world(\edge) = 0 $
\end{enumerate}

We now define the computational cost (planning time), of solving the SSSP problem as a function of $\Vexp$ and $\Eeval$. Let $\costEval$ be the average cost of evaluating an edge, and $\costExp$ be the average cost of rewiring a vertex. We approximate the total planning time as a linear combination:
\begin{equation}
\label{eq: computation_cost_definition}
\costTotal(\Eeval, \Vexp) = \costEval \abs{\Eeval} + \costExp \abs{\Vexp}
\end{equation}

Our motivation for defining the cost will become clearer in the following section, where we propose a general framework for $\ALG$. This framework lets us explicitly reason about the terms $\Eeval$ and $\Vexp$ in order to balance them. 



\section{Generalized Lazy Search}
\label{sec:framework}

We propose a framework, Generalized Lazy Search (\glrastar), to solve the problem defined in Section~\ref{sec:problem_formulation}. 
The general concept idea is to \emph{toggle} between lazily searching to a horizon and evaluating edges along the current estimated shortest path. 
This toggle must be chosen appropriately to balance the competing computational costs of edge evaluation and vertex rewiring.



\subsection{The Algorithm}


Algorithm~\ref{alg:GLS} describes the \glrastar framework for the shortest path algorithm $\ALG(\graph, \vs, \vg, \world)$ referred to in Section~\ref{sec:problem_formulation}. This framework requires two functions: \Event{} and \Selector{}.

To solve the SSSP problem, we maintain a shortest path search tree over \graph{}. 
We assume that every call to \world{}, which populates \Eeval{}, is expensive. 
Therefore, we initially assume that all edges in \graph{} are valid and maintain this search tree \emph{lazily}. 
Our algorithm initializes the search tree \LazyTree{} rooted at \vs{} (Line 1).
It begins by iteratively extending \LazyTree into \graph{} (Line 4). The search is guided with an admissible heuristic $h(\vertex,\vg)$. 

The procedure \textsc{ExtendTree} additionally takes as input a function \Event{}. 
Extending \LazyTree{} triggers the \Event{} by definition. 
The algorithm, at this point, discontinues the extension of \LazyTree{} and switches to validate the already constructed search tree. 
Therefore, the \Event{} acts as a toggle between lazy seach and edge evaluation.

\begin{algorithm}[tb]
\SetAlgoLined
\caption{Generalized Lazy Search}
\label{alg:GLS}
\SetKwInOut{Input}{Input}
\SetKwInOut{Parameter}{Parameter}
\SetKwInOut{Output}{Output}
\Input{$\text{Graph } \graph{}, \text{ source } \vs{}, \text{ target } \vg{}, \text{ world } \world{}$}
\Parameter{$\Event,\Selector$}
\Output{$\Path^*,~\Eeval{},~\Vexp$}
\vspace{2mm}
$\Eeval \gets \emptyset, \Vexp \gets \emptyset$ \\
$\LazyTree \gets \{\vs\}$ \Comment{Initialize} \\
\vspace{1mm}
\Repeat{shortest feasible path found $\text{s.t.} \forall \edge \in \Path^*, \world(\edge)=1$}
{
$\LazyTree \gets $ \textsc{ExtendTree~(\Event, \LazyTree)} \Comment{Add \Vexp{}} \\
\vspace{0.5mm}
$\Subpath \gets $ \textsc{GetShortestPathToLeaf~(\LazyTree)} \\
\vspace{0.5mm}
\textsc{EvaluateEdge~(\Selector, \Subpath)}  \Comment{Add \Eeval{}} \\
}
\end{algorithm}

\begin{definition}[\Event]
A function that defines the toggle between extending the lazy search tree and validating it.
\label{definition:Event}
\end{definition}



To solve the SSSP problem and validate \LazyTree{}, the algorithm picks the path, \Subpath{}, to a leaf vertex with the lowest estimated total cost to reach the goal (Line 5). It then evaluates an edge along \Subpath{} to validate the search tree (Line 6). In addition to \Subpath{}, the procedure \textsc{EvaluateEdge} also takes as input a function \Selector{}. The \Selector{} acts on \Subpath{} and returns an edge belonging to it that the algorithm evaluates.

\begin{definition}[\Selector]
A function that defines the strategy to select an edge along a subpath to evaluate.
\label{defintion:Selector}
\end{definition}

Edge evaluation is followed by the extension of \LazyTree{} until the \Event{} is triggered again. If the edge were invalid, the subtree emanating from the edge has to be rewired. We can do this efficiently using the mechanics of LPA*~\cite{KLF04}.

This process of interleaving search with edge evaluation continues until the algorithm terminates with the shortest feasible path from source to goal, if one exists. While the algorithm is guaranteed to return the shortest path, the framework permits the design of \Event and \Selector to reduce the total computation cost of solving the SSSP problem.

\begin{algorithm}[!tbh]
\SetAlgoLined
\caption{Candidate \Event{} Definitions}
\label{alg:event_definitions}

\SetKwInOut{Input}{Input}
\SetKwInOut{Output}{Output}
\SetKwProg{function}{Function}{}{}

\SetKwFunction{shortestPath}{\textsc{ShortestPath}}
\SetKwFunction{pathExistence}{\textsc{SubpathExistence}}
\SetKwFunction{constantDepth}{\textsc{ConstantDepth}}
\SetKwFunction{heuristicProgress}{\textsc{HeuristicProgress}}

$v \gets$ leaf vertex in \LazyTree{} with least estimated cost to \vg
\vspace{2mm}

\function{\shortestPath{}}
{
	\If{$v = \vg$}
	{
		\KwRet\ \textbf{true};
	}
}

\function{\constantDepth{depth $\alpha$}}
{
	$\Subpath \gets$ path from \vs~to $v$ \\
	$\alpha_v \gets$ number of unevaluated edges in $\Subpath$ \\
	\If{$\alpha_v = \alpha$ \textbf{or} $v = \vg$}
	{
		\KwRet\ \textbf{true};
	}
}
\function{\heuristicProgress}
{	
	$\hmin \gets \min_{(u',v') \in \Eeval} h(v', \vg)$\\
	\If{$h(\vertex, \vg) < \hmin$ \textbf{or} $v = \vg$}
	{
		\KwRet\ \textbf{true};
	}
  \KwRet\;
}

\function{\pathExistence{probability $\delta$}}
{
	$\Subpath \gets$ path from \vs~to $v$ \\
	$p \gets \prod_{e\in\sigma}{\vectorp{}(e)}$ \\
	\If{$p \leq \delta$ \textbf{or} $v = \vg$}
	{
		\KwRet\ \textbf{true};
	}
}
\end{algorithm}
 
\begin{algorithm}[!tbh]
\SetAlgoLined
\caption{Candidate \Selector{} Definitions}
\label{alg:selector_definitions}

\SetKwInOut{Input}{Input}
\SetKwInOut{Output}{Output}
\SetKwProg{function}{Function}{}{}

\SetKwFunction{forward}{\textsc{Forward}}
\SetKwFunction{alternate}{\textsc{Alternate}}
\SetKwFunction{greedy}{\textsc{FailFast}}
\SetKwFunction{weighted}{\textsc{Weighted}}

\function{\forward{}}
{
  \KwRet\ \{first unevaluated edge closest to \vs\};
}
\function{\alternate{}}
{
  \If{Iteration Number is Odd}
  {
    \KwRet\ \{first unevaluated edge closest to \vs\};
  }
  \Else
  {
    \KwRet\ \{first unevaluated edge closest to \vg\};
  }
}
\function{\greedy{}}
{
 \KwRet\ \{$\argminprob{\edge~\in \Subpath}~\vectorp(\edge)$\};
}
\end{algorithm}




\subsection{Role of \Event and \Selector}
\label{sec:framework:event_selector}

Since the lazy search paradigm operates based on the concept of optimism under uncertainty, the search tree is extended assuming edges are collision free. However, extending the search tree beyond edges that are in collision can waste computational effort. The \Event acts as a toggle to halt a search deemed wasteful. The \Selector aims to quickly invalidate the path. Fig.~\ref{fig:illustration_algorithm} illustrates the ideal behavior of such an algorithm. Interestingly, the framework can also capture existing lazy search algorithms as different combinations of event and selectors, as shown in Table.~\ref{tab:equivalence}.

\paragraph{\Event.}
 When triggered, events must ensure that the shortest subpath \Subpath{} in \LazyTree{} has at least one unevaluated edge (Theorem~\ref{theorem:complete}). Algorithm~\ref{alg:event_definitions} defines some candidate events. 

\eventShortestPath{} (\texttt{SP}) is triggered when a shortest path to \vg{} has been determined during the lazy extension of \LazyTree. Therefore, in every iteration, this \Event{} presents the \Selector{} with the candidate shortest path from \vs{} to \vg{} on \graph{}. Note that \eventShortestPath{} exhibits algorithmic behavior similar to \lazySP{} and \lazyPRM{}.

\eventConstantDepth{} (\texttt{CD}) is triggered when the procedure \textsc{ExtendTree} chooses to extend a leaf vertex $v \in \LazyTree{}$ such that the subpath from \vs to $v$ has exactly $\alpha$ number of unevaluated edges. Therefore, in every iteration, this \Event{} presents the \Selector{} with \Subpath{} that is characterized by a constant number of unevaluated edges.

\eventHeuristicProgress (\texttt{HP}) is triggered whenever the search expands a vertex whose heuristic value is lower than any vertex whose incident edge has been evaluated. It does so by recording the minimum heurisitic value of a vertex with a parent that has been evaluated, i.e., $\hmin \gets \min_{(u',v') \in \Eeval} h(v', \vg)$. The event is triggered whenever \textsc{ExtendTree} chooses to extend a leaf vertex $v \in \LazyTree{}$ with a heuristic value smaller than $\hmin$. 





\paragraph{\Selector.}
Selectors must ensure that they select at least one unevaluated edge (Theorem~\ref{theorem:complete}).  Algorithm~\ref{alg:selector_definitions} defines some candidate selectors.

Given \Subpath, \selectorForward{} (\texttt{F}) evaluates the first unevaluated edge on \Subpath{} that is closest to \vs. Given a forward search, this constitutes one of the most natural \Selector{}s available. 
\selectorAlternate{} (\texttt{A}) toggles between evaluating the first unevaluated edge closest to \vs{} and \vg{} in every iteration. This approach is motivated by bi-directional search algorithms. Both \Selector{}s were first used in \cite{DS16}.


\begin{table}[!h]
\small
\centering
\begin{tabulary}{\columnwidth}{LCC}\toprule
  {\bf Algorithm}                    				& \Event   						& \Selector  \\ \midrule
  \emph{LazyPRM}~\shortcite{lazyPRM}           		    & \eventShortestPath   & Any     \\
  \lazySP~\shortcite{DS16}           					& \eventShortestPath   & Any     \\
  \lwastar~\shortcite{CPL14}                             & \eventConstantDepth(1)        & \selectorForward \\
  \lrastar~\shortcite{Mandalika18}         				& \eventConstantDepth($\alpha$)   & \selectorForward  \\ \bottomrule
\end{tabulary}
\caption{Equivalence of \glrastar to existing lazy algorithms}
\label{tab:equivalence}
\end{table}


\subsection{Analysis}
\label{sec:framework_analysis}

For any choice of \Event and \Selector, \glrastar is complete and correct. 

\begin{thm}[\textbf{Completeness}]
\label{theorem:complete}
Let \Event be a function that on halting ensures there is at least one unevaluated edge on the current shortest
path or that the goal is reached.
Let \Selector be a function that evaluates at least one
unevaluated edge (if it exists).
\glrastar implemented using \textsc{ExtendTree}(\Event) and \textsc{EvaluateEdges}(\Selector) on a finite graph $\graph$ is complete.
\end{thm}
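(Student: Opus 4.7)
The plan is to show two things: (i) every iteration of the main \texttt{Repeat} loop makes measurable progress on a finite resource, so the algorithm must terminate; and (ii) the termination condition together with the invariants maintained by \textsc{ExtendTree} and \textsc{EvaluateEdge} certify that the returned $\Path^*$ is the shortest feasible path.

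For termination, I would argue as follows. The \Event halts \textsc{ExtendTree} only when either (a) the current shortest subpath $\Subpath$ to some leaf contains at least one unevaluated edge, or (b) the leaf vertex is $\vg$ and the subpath to it contains an unevaluated edge (otherwise the loop would have already exited). In case (a), by hypothesis on \Selector, the call to \textsc{EvaluateEdge} evaluates at least one previously unevaluated edge, so $|\Eeval|$ strictly increases. In case (b), if any edge on the returned path to $\vg$ is still unevaluated, \Selector evaluates one, again strictly increasing $|\Eeval|$; if every edge is already evaluated and valid, the exit condition of the \texttt{Repeat} loop fires. Since \graph is finite, $|\edgeSet|$ bounds the number of iterations in which $|\Eeval|$ can grow, so the loop exits in finitely many iterations.

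For correctness, I would invoke the standard invariants of lazy search combined with LPA*-style rewiring. \LazyTree is, at every iteration, a valid shortest-path tree in the subgraph of \graph whose edges are those not yet proved invalid (optimistic edges plus edges evaluated and found valid). Consequently, the \Subpath selected in Line~5 is the shortest path in this optimistic graph whose leaf has the least estimated cost to $\vg$, and in particular, at the moment the loop exit condition is checked, $\Path^*$ is the shortest path to $\vg$ in the optimistic graph. The exit condition explicitly requires $\world(\edge)=1$ for every $\edge\in\Path^*$, which means all its edges have been evaluated and are valid, so $\Path^*$ is feasible. Any strictly shorter feasible path would also be a path in the optimistic graph and would have been returned as \Subpath first, forcing at least one of its edges to be evaluated; since it is not in the optimistic graph at termination, one of those evaluations must have returned $0$, certifying its infeasibility. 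Both conditions of Section~\ref{sec:problem_formulation} are therefore satisfied.

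The main obstacle, and the only nontrivial step, is justifying that \LazyTree is maintained as a correct shortest-path tree over the optimistic graph across interleaved extensions and rewires; this is where the appeal to LPA* semantics does the heavy lifting, since after every edge evaluation that invalidates an edge, the descendants of that edge must be consistently rewired before \textsc{GetShortestPathToLeaf} is called again. Everything else reduces to counting evaluated edges and reading off the termination condition.
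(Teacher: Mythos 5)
Your termination argument is essentially the paper's own proof: each loop iteration either strictly enlarges the finite set $\Eeval$ (by the hypotheses on \Event and \Selector) or triggers the exit condition, so the algorithm halts after at most $|\edgeSet|$ evaluations. The additional correctness/optimality discussion you include is not needed for this statement --- the paper proves that separately (Theorem~\ref{theorem:correct}) via an admissible-heuristic $f$-value argument rather than your optimistic-graph invariant, but for completeness itself your proof matches the paper's.
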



\begin{proof}{}
In each iteration, \textsc{ExtendTree}(\Event) ensures there is atleast one unevaluated edge on the shortest path (unless the goal has been reached). The \textsc{EvaluateEdges}(\Selector) evaluates atleast one edge. Since there are a finite number of edges, the algorithm will eventually terminate. 
\end{proof}

\begin{thm}[\textbf{Correctness}]
\label{theorem:correct}
If the heuristic $h(\vertex, \vg)$ is admissible, then \glrastar terminates with the shortest feasible path.
\end{thm}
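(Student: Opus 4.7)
The plan is to bootstrap from Theorem~\ref{theorem:complete} (termination) and then run an \astar{}-style optimality argument whose engine is the admissibility of $h$. Concretely, I would (i) establish a lower-bound invariant on the lazy search tree, and (ii) use it at the exit iteration to certify that the returned path $\Path^*$ beats every feasible alternative in $\graph$.

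The invariant I intend to prove by induction on iterations is: for every vertex $v$ reached by \LazyTree{}, its cost-to-come $g(v)$ in the tree satisfies $g(v) \leq w(\Path)$ for every \vs-to-$v$ path $\Path$ in $\graph$ whose edges are all valid under $\world$. The reason this should hold is that \textsc{ExtendTree} treats every unevaluated edge optimistically with its true weight $w(\edge)$, so the tree is a shortest-path tree over a graph containing every feasible edge; and the LPA$^*$-style rewire triggered by an invalidating \textsc{EvaluateEdge} only removes edges and repairs $g$-values, which can only weakly increase. Combined with admissibility of $h$, this gives $f(v) := g(v) + h(v, \vg) \leq w(\Path)$ for every feasible \vs-to-\vg path $\Path$ through $v$.

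At the exit iteration the algorithm returns some $\Path^*$ chosen in Line~5 as the min-$f$ leaf-subpath with endpoint $\vg$ and every edge evaluated valid, so $w(\Path^*) = g(\vg) = f(\vg)$. For any other feasible \vs-to-\vg path $\Path'$ in $\graph$, I would walk along $\Path'$ from $\vs$ and consider two cases: either every vertex of $\Path'$ is already contained in \LazyTree{}, in which case shortest-path consistency of the tree forces $g(\vg) \leq w(\Path')$ and hence $w(\Path^*) \leq w(\Path')$; or $\Path'$ first exits \LazyTree{} at some leaf $v \neq \vg$, and by the min-$f$ selection $f(v) \geq f(\vg)$, while by the invariant $w(\Path') \geq g(v) + h(v, \vg) = f(v)$, giving $w(\Path') \geq w(\Path^*)$.

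The main obstacle I anticipate is making the lower-bound invariant watertight, because \LazyTree{} is mutated by two rather different operations interleaved according to \Event{}: optimistic extension via \textsc{ExtendTree}, and post-invalidation rewiring via \textsc{EvaluateEdge}. I would handle this with a careful induction verifying that each operation preserves the statement ``$g(v)$ equals the shortest cost in the current optimistic subgraph, which in turn lower-bounds the true feasible cost.'' A secondary subtlety is the frontier step in the optimality argument: \Event{} can halt extension partway through a wavefront, so I would lean on its precondition from Theorem~\ref{theorem:complete}---that every triggered halt leaves at least one unevaluated edge on the current min-$f$ subpath unless $\vg$ has been reached---to rule out exiting with an insufficiently extended tree, which is exactly what makes the comparison against $\Path'$ go through.
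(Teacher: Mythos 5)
Your proposal is correct and takes essentially the same route as the paper's proof: the standard \astar admissibility argument that, at termination, the min-$f$ leaf (which is \vg, fully verified) has $f$-value no larger than that of the frontier vertex lying on any feasible path, so the returned path is no longer than the optimum. The only difference is that you make explicit the invariant that lazy $g$-values lower-bound the true feasible cost-to-come, which the paper leaves implicit.
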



\begin{proof}{}
Let $\Path^*$ be the shortest feasible path with respect to $w(\cdot)$ and world $\world$. 
For any vertex $\vertex^* \in \Path^*$, we denote its f-value to be $f(\vertex^*) = w(\Path_{\vs, \vertex^*}) + h(\vertex^*, \vg)$, where $\Path_{\vs, \vertex^*}$ is the subpath from the start to vertex $\vertex^*$.
As  our heuristic function is admissible, we have that $f(\vertex^*) \leq w(\Path^*)$.
Recall that in each iteration, the inner \textsc{GetShortestPathToLeaf~()} returns a vertex $\vertex_{\rm ret}$ with the smallest f-value among all the leaves of the tree $\tau_{\rm lazy}$.
Let $\vertex^*_{\rm leaf}$ be the leaf vertex on $\tau_{\rm lazy}$ that lies on the shortest feasible path $\Path^*$.
Hence, $f(\vertex_{\rm ret}) \leq f(\vertex^*_{\rm leaf}) \leq  w(\Path^*)$. 
If \glrastar terminates with $\vertex_{\rm ret}$, this implies $\Path_{\vs, \vertex_{\rm ret}}$ is verified to be feasible and $\vertex_{\rm ret} = \vg$. Let the verified path that is returned be $\Path_{\rm ret}$ such that $\Path_{\rm ret} = \Path_{\vs, \vg}$. In that case $w(\Path_{\rm ret}) = f(\vg) \leq w(\Path^*)$.
\end{proof}

\lazySP with the \selectorForward selector was proved to be \emph{edge optimal\footnote{See \cite{Mandalika18} for the formal computational model}} in the class of all shortest path algorithms that use a \selectorForward selector~\cite{Mandalika18}. We now show that \glrastar lets us derive another algorithm that is \emph{also edge-optimal} but reduces number of vertex rewires.

\begin{thm}[\textbf{Edge Optimality}]
\label{theorem:optimal}
\glrastar evaluates the same number of edges $\Eeval$ as \lazySP, i.e., is edge optimal, while having a smaller number of vertex rewires $\Vexp$ under the following setting:
\begin{enumerate}
	\item Heuristic: Distance on the unevaluated graph $h_{\graph}(\vertex,\vg)$
	\item \Event: \eventHeuristicProgress
	\item \Selector: \selectorForward
\end{enumerate}
\end{thm}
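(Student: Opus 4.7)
The plan is to prove the theorem in two parts: (i) edge optimality, i.e., that GLS with (\eventHeuristicProgress, \selectorForward, $h_\graph$) evaluates the same set $\Eeval$ as \lazySP with \selectorForward; and (ii) the induced multiset $\Vexp$ of vertex rewires is no larger, and strictly smaller whenever at least one invalid edge is discovered strictly before \vg.

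For edge optimality, the key property to exploit is that $h_\graph$ is an \emph{exact} heuristic on the current lazy graph (the subgraph of \graph{} obtained by removing the edges already discovered to be invalid). I would prove by induction on iterations the following invariant: at the moment \eventHeuristicProgress fires at a leaf $v$, (a) the tree-path from \vs{} to $v$ in \LazyTree{} is a prefix of the current lazy shortest path $P^*$ to \vg, and (b) no vertex of $P^*$ strictly beyond $v$ is yet present in \LazyTree. Claim (a) holds because with an exact heuristic every $u$ on $P^*$ satisfies $f(u) = g(u) + h_\graph(u,\vg) = w(P^*)$ while every $u$ off $P^*$ satisfies $f(u) > w(P^*)$, so the minimum-$f$ leaf that triggers expansion must lie on $P^*$ and its tree-prefix coincides with $P^*$. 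Claim (b) is forced by the event condition $h(v,\vg) < \hmin$, which requires $v$ to be strictly closer to \vg{} in lazy distance than the endpoint of any previously evaluated edge, and therefore places $v$ at the first unevaluated vertex of $P^*$ past the last evaluated edge. Combining (a) and (b), \selectorForward returns the first unevaluated edge of the tree-prefix $\vs \to v$, which equals the first unevaluated edge of $P^*$---precisely the edge \lazySP with \selectorForward would evaluate at the corresponding step. A simultaneous induction between the two algorithms then gives equality of the two edge sets.

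For the rewire comparison, I would contrast the size of \LazyTree{} at each moment an invalid edge is discovered. \lazySP extends the lazy tree all the way to \vg{} before evaluating any edge, so when an edge $(v_{i-1}, v_i)$ on $P^*$ is declared invalid, every subsequent vertex $v_i, v_{i+1}, \ldots, \vg$ on $P^*$ is a descendant of the broken edge in \LazyTree{} and must be reassigned a new parent, contributing one entry to $\Vexp$ per such vertex. GLS with (\eventHeuristicProgress, \selectorForward, $h_\graph$), by claim (b), halts its extension exactly at $v_i$, so only $v_i$ lies past $v_{i-1}$ in \LazyTree{} at the instant of invalidation, and a single rewire suffices. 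Summing over the common set of invalid edges identified in part (i) gives the desired inequality on $|\Vexp|$.

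The main obstacle is making the induction of part (i) rigorous in the face of three complications: tie-breaking among multiple lazy shortest paths, the fact that $h_\graph$ itself changes whenever an edge is invalidated (the lazy graph shrinks), and the LPA*-style reuse of \LazyTree{} across iterations rather than a rebuild. I would address the second by always defining $P^*$ and $h_\graph$ with respect to the graph remaining after the current set of invalidations, and the third by invoking the LPA* invariant that only vertices whose $g$-value actually changes are rewired, which preserves the on-path prefix across iterations without forcing extra expansions. Tie-breaking I would handle by fixing a consistent rule shared by both algorithms so they track the same sequence of lazy shortest paths, which is a standard assumption but needs to be stated explicitly to make the edge-set equality watertight.
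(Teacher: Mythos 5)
Your proposal follows essentially the same route as the paper: a simultaneous induction maintaining that both algorithms have evaluated the same edge set and that the \glrastar tree-prefix coincides with a prefix of \lazySP's current lazy shortest path, so that \selectorForward picks the identical edge, while the rewire savings come from \glrastar never having extended the tree past the event-triggering leaf. Two of your intermediate claims are stronger than what actually holds (and stronger than what the paper asserts): since $\hmin$ is the minimum over the children of \emph{all} evaluated edges, including those on previously discarded paths, the event need not fire at the first unevaluated vertex past the last evaluated edge on the current path, and consequently an invalidation may force rewiring of several tree vertices between that edge and the stopping leaf rather than "a single rewire"; the paper only claims the savings $|\Path_{\vertex^i,\vg}|$ per invalidation, which is all the theorem needs, so your conclusion survives once these claims are weakened accordingly. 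Your reading of $h_{\graph}$ as recomputed on the lazily updated graph also differs from the paper's static distance-on-$\graph$ heuristic, but either interpretation supports the same induction.
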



\begin{proof}{}
We are going to prove this via induction over iterations of \lazySP and \glrastar. In each iteration cycles through algorithm by invoking \textsc{EvaluateEdges~(\Selector)}, \textsc{ExtendTree~(\Event)} and \textsc{SelectShortestSubpath~()}. 

At iteration $i$, let $\EevalLSP^i$ and $\VexpLSP^i$ be the edges evaluated and vertex rewired respectively by \lazySP. Let $\Path^i$ be the candidate shortest path. 

Let $\EevalGLS^i$ and $\VexpGLS^i$ be the edges evaluated and vertices rewired, respectively, by \glrastar at iteration $i$. Let $h_{\graph} (\vertex, \vg)$ be the heuristic used by the search which corresponds to the distance on the graph $\graph$. Let $\vertex^i$ be the leaf vertex corresponding to the current shortest subpath from the start $\Path_{\vs, \vertex^i}$. This implies $\vertex^i$ corresponds to the vertex with the smallest f-value $w(\Path_{\vs, \vertex^i}) + h_{\graph} (\vertex^i, \vg)$. 

We also introduce the lazy edge status function $\world_{\rm lazy}(\Path, \Eeval)$ which determines if a path $\Path$ is valid depending on edges evaluated thus far in $\Eeval$.

Following are the conditions for the induction:
\begin{enumerate}
	\item[A] Both algorithms have the same set of evaluated edges $\EevalGLS^i = \EevalLSP^i$.
	\item[B] Both algorithms share the same subpath $\Path_{\vs, \vertex}^i \subseteq \Path^i$.
\end{enumerate}

For $i=1$, $\EevalGLS^1 = \EevalLSP^1$ because no edges have been evaluated. Hence (A) is true. Since $h_{\graph} (\vertex, \vg)$ is the distance on the unevaluated graph, the leaf vertex $\vertex^i$ considered by \glrastar lies on $\Path^1$, i.e. $\Path_{\vs, \vertex}^1 \subseteq \Path^1$. Hence (B) is true.

We will show these conditions hold for $i+1$.

Since both \lazySP and \glrastar use \selectorForward, share the same subpath (A) and have the same evaluation status (B) - they both evaluate the same edge $\edge$. Both algorithms increase their evaluated set $\EevalLSP^{i+1} = \EevalGLS^{i+1} \gets \EevalGLS^{i} \cup \edge$. Hence (A) holds. 

If $\edge$ is valid, neither algorithms rewire vertices. However, if an edge is in collision, \lazySP rewires at least the remainder of the path $\Path^{i+1}$. \glrastar does not have to rewire the remainder of the subpath $\Path_{\vertex^i, \vg}$ as it was never expanded during the search. Hence \glrastar can only result in smaller rewires, i.e. $|\VexpGLS^{i+1}| \leq |\VexpLSP^{i+1}| - |\Path_{\vertex^i, \vg}|$.

We will now show that $\Path_{\vs, \vertex^{i+1}} \subseteq \Path^{i+1}$. 

\lazySP finds the next candidate shortest path $\Path^{i+1}$ by solving the following search problem
\begin{equation}
\label{eq:lsp_opt}
\begin{aligned}
\Path^{i+1} \gets \argmin_{\Path} \;\;  & w(\Path) \\
\text{s.t.} \;\; & \world_{\rm lazy}(\Path, \EevalLSP^{i+1})=1 \\
\end{aligned}
\end{equation}

\glrastar invokes the \textsc{ExtendTree~(\Event)} which proceeds till \eventHeuristicProgress toggles off the search. The search stops at vertex $\vertex^{i+1}$ which satisfies the following:
\begin{equation}
\label{eq:gls_opt}
\begin{aligned}
\vertex^{i+1} \gets \argmin_{\vertex} \;\;  & w(\Path_{\vs,\vertex}) + h_{\graph} (\vertex, \vg) \\
\text{s.t.} \; \;& h_{\graph} (\vertex, \vg) < \hmin \\
\end{aligned}
\end{equation}
Note that $\world_{\rm lazy}(\Path_{\vs,\vertex}, \EevalGLS^{i+1}) = 1$, i.e. the subpath from the start to any vertex is valid according to the lazy estimate.

By definition, the heuristic $h_{\graph} (\vertex, \vg) = w(\Path_{\vertex,\vg})$ is the weight of the shortest path on the unevaluated graph $\Path_{\vertex,\vg}$. The heuristic progress threshold $\hmin$ is by definition the minimum heuristic value of the child vertex of any evaluated edge, i.e. $\hmin = \min_{ (u',v') \in \EevalGLS^{i+1} } h_{\graph} (v', \vg)$. Since $h_{\graph} (\vertex, \vg)$ is consistent, $ h_{\graph} (\vertex, \vg) < \hmin < \min_{ (u',v') \in \EevalGLS^{i+1} } h_{\graph} (v', \vg)$ implies that none of the edges $(u',v') \in \Path_{\vertex,\vg}$ belong to $\EevalGLS^{i+1}$ have been evaluated. This means that the subpath to goal is valid according to the lazy estimate, i.e. $\world_{\rm lazy}(\Path_{\vertex,\vg}, \EevalGLS^{i+1}) = 1$. 

Hence (\ref{eq:gls_opt}) can be re-written as 
\begin{equation}
\label{eq:gls_opt_new}
\begin{aligned}
\vertex^{i+1} \gets \argmin_{\vertex} \;\;  & w(\Path_{\vs,\vertex}) + w(\Path_{\vertex,\vg}) \\
\text{s.t.} \;\; & \world_{\rm lazy}(\Path_{\vs,\vertex}, \EevalGLS^{i+1}) = 1 \\
&\world_{\rm lazy}(\Path_{\vertex,\vg}, \EevalGLS^{i+1}) = 1 \\
\end{aligned}
\end{equation}

Since $\EevalGLS^{i+1} = \EevalLSP^{i+1}$, (\ref{eq:lsp_opt}) and (\ref{eq:gls_opt_new}) are the same optimization. Hence $\Path_{\vs, \vertex^{i+1}} \subseteq \Path^{i+1}$ and (B) holds. As a result, the induction holds.

This process continues till both algorithms discover the shortest feasible path $\Path^*$ at the end of iteration $N$. Both evaluate the same number of edges $\EevalLSP^{N+1} = \EevalGLS^{N+1}$. But \glrastar saves on more vertices being rewired than \lazySP, i.e. $|\VexpGLS^N| \leq |\VexpLSP^N| - \sum_{i=1}^N |\Path_{\vertex^i, \vg}|$. 
\end{proof}

\begin{cor}
There is a graph $\graph$ for which the number of vertex rewires $\Vexp$ for \lazySP over \glrastar is linear over logarithmic. 
\end{cor}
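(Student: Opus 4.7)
I will exhibit a family of graphs that realizes the separation and then count vertex rewires on each side under the edge-optimal configuration of Theorem~\ref{theorem:optimal}.

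\emph{Construction.} Let $k := \lceil \log_2 n \rceil$ and $L := \lfloor n/k \rfloor$, and consider a ``sun'' graph $\graph_n$ with a source $\vs$, a target $\vg$, and $k$ internally disjoint, unit-weight $\vs$-$\vg$ paths (``petals'') $\vs \to u_{i,1} \to u_{i,2} \to \cdots \to u_{i,L-1} \to \vg$ for $i = 1, \ldots, k$. In the world $\world$, the head edge $(\vs, u_{i,1})$ is invalid for $i = 1, \ldots, k-1$ and valid for $i = k$; every other edge is valid. Take the heuristic $h_\graph(v,\vg)$ to be the graph distance from $v$ to $\vg$ (admissible and consistent) and break priority-queue ties in favor of greater depth in the current lazy tree.

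\emph{Rewires of \lazySP.} In each failing iteration $i < k$, \lazySP extends its lazy tree all the way to $\vg$, materialising the entire $i$-th petal as the candidate shortest path. Its \selectorForward then evaluates $(\vs, u_{i,1})$, finds it invalid, and the LPA*-style rewire cascades through every descendant of $u_{i,1}$ on that petal, contributing $\Theta(L)$ rewires. Summed over the $k-1$ failing iterations, $|\VexpLSP| = \Theta(kL) = \Theta(n)$.

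\emph{Rewires of \glrastar.} Run \glrastar with heuristic $h_\graph$, \Event{} = \eventHeuristicProgress, and \Selector{} = \selectorForward. I trace that in failing iteration $i \ge 2$, the head $u_{i,1}$ is the leaf selected by the frontier, but adding it does \emph{not} fire \eventHeuristicProgress because $h_\graph(u_{i,1},\vg) = L-1 = \hmin$ (the minimum heuristic over children of already-evaluated edges). One more extension inserts $u_{i,2}$ with $h_\graph(u_{i,2},\vg) = L-2 < \hmin$, firing the event; forward-selecting $(\vs, u_{i,1})$ and invalidating it rewires exactly the two descendants $u_{i,1},u_{i,2}$. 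Iteration $1$ rewires only $u_{1,1}$, and the successful final iteration progresses along petal $k$ without triggering any rewire. Hence $|\VexpGLS| = 1 + 2(k - 2) = \Theta(\log n)$, and dividing yields $|\VexpLSP|/|\VexpGLS| = \Theta(n/\log n)$, as claimed.

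The main subtlety of the plan is pinning down consistent LPA*-style rewire semantics for \lazySP, so that collapsing a fully expanded petal really does charge $\Theta(L)$ rewires rather than a single re-search charged as $O(1)$; this is consistent with the cost model in~(\ref{eq: computation_cost_definition}) and the semantics underlying Theorem~\ref{theorem:optimal}. A secondary subtlety is the tie-breaking choice for \glrastar, which I fix toward deeper vertices to guarantee that the event fires after exactly one step of descent into each fresh petal; otherwise the head $u_{i,1}$ of the next petal could be selected (tied at f-value $L$) and the rewire count would merely shift by a constant without changing the asymptotics.
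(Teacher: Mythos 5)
Your construction is correct, but it takes a genuinely different route from the paper's. The paper builds a fan of $l$ start-edges feeding two hub vertices $A$ and $B$ that share a single chain of $N$ vertices, with weights arranged so that successive candidate shortest paths alternate between the hubs; each invalidation then forces all $N$ chain vertices to genuinely re-parent (from $A$ to $B$ or back), giving \lazySP $O(Nl)$ rewires, while \glrastar with \eventHeuristicProgress never extends past the first layer and incurs \emph{zero} rewires --- the logarithmic term in the paper's ratio is actually the $O((N+l)\log(N+l))$ cost of the Dijkstra pass that computes the heuristic $h_\graph$, not a rewire count. Your sun graph with $\Theta(\log n)$ petals of length $\Theta(n/\log n)$ instead produces a genuine $\Theta(n)$-versus-$\Theta(\log n)$ separation in $\Vexp$ itself, which is arguably the more literal reading of the corollary as stated; what you lose is that the paper's instance also exhibits the ``optimistic thrashing'' phenomenon it wants to illustrate. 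Two points need tightening. First, since all your petals have equal length, \lazySP could terminate immediately if its shortest-path tie-breaking happens to pick petal $k$ first; you should perturb the petal weights (as the paper does) so that the valid petal is strictly the longest and the elimination order is forced, rather than relying on adversarial tie-breaking inside the shortest-path computation. Second, when a petal's head edge is invalidated its internal vertices have no alternative parent, so the $\Theta(L)$ charge to \lazySP is an LPA*-style orphaning/update cost rather than a re-parenting in the strict sense of the paper's definition of vertex rewiring --- the paper's $A$/$B$ gadget exists precisely so that every affected vertex acquires a new parent. You flag both subtleties yourself; they are fixable and do not affect the asymptotics.
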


\begin{proof}{}

\begin{figure}[!htbp]
\centering
\includegraphics[width=\columnwidth]{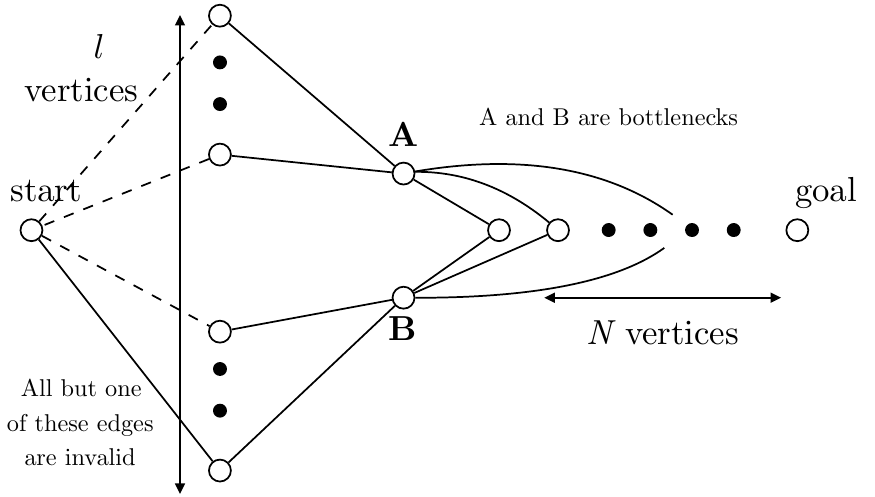}
\caption{ Counter examples}
\label{fig:counter}
\end{figure}

We are going to construct a counter example that shows a particularly bad case of vertex rewiring undertaken by \lazySP.

\paragraph{Scenario.}

Consider the graph in Fig.~\ref{fig:counter}. It has a set of $l$ vertices connected to the start. The upper half of the $l$ vertices are connected to vertex $A$. The lower half is connected to $B$. Each of $A$ and $B$ is connected to a chain of $N$ vertices going to the goal. $\abs{\vertexSet} = N+l$, $\abs{\edgeSet} = 3N + 2l - 1$.

The graph is such that one of $l$ edges connected to the start is valid, the rest is invalid. The remaining edges are all valid.

The weights of the graph are such that the shortest path alternates between the top and bottom halves of the graph. Assume that all $l-1$ shortest paths are invalid and the last one is valid. Finally assume that $A$ and $B$ alternate being the optimal parent to the $N$ vertices.

\paragraph{\lazySP computation}.

For \lazySP, the only computation is vertex rewiring. The graph is such that successive shortest paths alternate between the upper and lower halves. The shortest paths in the upper half pass through A and lower half pass through B. Hence every edge that is invalidated, causes all $N$ vertices to rewire to either vertex $A$ or $B$. This is the optimistic thrashing scenario explained in LazyPRM*. Since $l$ edges have to be invalidated, the number of rewires is $O(Nl)$
 
\paragraph{ \glrastar computation}

There are two computation steps to account for - heuristic computation and vertex rewiring. 

The heuristic computation is a Djikstra operation. 
\begin{equation}
\begin{aligned}
& O( \abs{\vertexSet} \log \abs{\vertexSet} + \abs{\edgeSet}) \\
& O( (N+l) \log (N+l) + 3N + 2l - 1) \\
& O( (N+l) \log (N+l)) \\
\end{aligned}
\end{equation}
Since the search never proceeds beyond the first set of edges, the amount of vertex rewiring is $0$. 

Hence the complexity of \lazySP is $O(Nl)$ while $\glrastar$ is $O((N+l) \log (N+l))$. The ratio is linear over logarithmic growth. 

\end{proof}





%
\section{Leveraging Edge Priors in \glrastar}
\label{sec:approach}

The \glrastar framework is powerful because one can optimize \Event and \Selector to minimize computational costs while still retaining guarantees. Here, we show its expressive power in a scenario where we have additional side information, such as priors on the validity of edges. Such information can be collected from datasets of prior experience or generated from approximations of the world representation. 

\subsection{Modified Problem Formulation}
We assume that the validity of each edge is an independent Bernoulli random variable. We are given a vector of probabilities $\vectorp \in [0,1]^{\abs{\edgeSet}}$, such that $P(\world(\edge) = 1) = \vectorp(\edge)$, i.e., for each edge $\edge$, we have access to $\vectorp(\edge)$, which defines the probability of the edge being valid in the current world $\phi$.

We allow the shortest path algorithm $\ALG(\graph, \vs, \vg, \world, \vectorp)$ to leverage knowledge of edge probabilities $\vectorp$ to minimize the expected computation cost as follows:
\begin{equation}
\begin{aligned}
\label{eq: objective_function_definition}
\min 		 \;\;  & \expect{\world \sim \vectorp}{ C(\Eeval, \Vexp) } \\
\suchthat 	  \;\; & \Eeval,~\Vexp = \ALG(\graph, \vs, \vg, \world, \vectorp) \\
\end{aligned}
\end{equation}

\subsection{\Event and \Selector Design}

\paragraph*{Event.} The \Event restricts lazy search from proceeding beyond a point when the search is likely to be ineffective, i.e. to a point that potentially increases the amount of rewires \Vexp. One such case is when the current shortest subpath is likely to be in collision, i.e., the probability of being valid drops below a threshold $\delta$. We describe this event, \eventPathExistence (\texttt{SE}), in Algorithm~\ref{alg:event_definitions}. 
We show that we can bound the performance of this event.
\begin{thm}
\label{theorem:existence_priors}
For any \Selector, the expected planning time of \eventPathExistence($\delta$) can be upper bounded as:
\begin{equation}
K \left( \costEval \frac{1}{(1 - \delta)} + \costExp \frac{b \log (\delta)}{\log(\pmax)} \right)
\end{equation} 
where $K$ is the number of shortest-paths that are infeasible, $b$ is the maximum branching factor, and $\pmax$ is the maximum value of an edge prior. 
\end{thm}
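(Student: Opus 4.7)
The plan is to decompose $\mathbb{E}_{\world\sim\vectorp}[\costTotal(\Eeval,\Vexp)] = \costEval\,\mathbb{E}[|\Eeval|] + \costExp\,\mathbb{E}[|\Vexp|]$ by linearity, and to show that work accumulates over at most $K$ outer iterations of \glrastar{}. Each outer iteration either invalidates the current candidate shortest subpath or terminates with a feasible path; by definition $K$ upper bounds the number of infeasible candidates preceding the true shortest feasible path, so the total expected cost is at most $K$ times the expected per-iteration cost. It therefore suffices to bound expected per-iteration edge evaluations by $1/(1-\delta)$ and expected per-iteration rewires by $b\,\log\delta/\log\pmax$.

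For the edge-evaluation term, I would observe that when \eventPathExistence($\delta$) toggles off search, the current subpath $\Subpath$ satisfies $\prod_{e\in\Subpath}\vectorp(e)\le\delta$, i.e., the probability that the full subpath is valid is at most $\delta$. An arbitrary \Selector{} processes edges of $\Subpath$ sequentially until an invalid edge is encountered or the subpath is fully verified. Letting $X$ be the number of evaluations in the iteration, I would apply the tail-sum identity $\mathbb{E}[X]=\sum_{k\ge 1}P(X\ge k)$ and collapse the prefix-validity probabilities into a single Bernoulli-style bound using the joint constraint $\prod\vectorp(e)\le\delta$. This yields $\mathbb{E}[X]\le 1/(1-\delta)$ uniformly over \Selector{}s -- independent of the selector's ordering on $\Subpath$.

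For the rewire term, observe that the event triggers at the first depth $d$ at which the cumulative prior-product drops to $\delta$. Since every edge prior is at most $\pmax$, the running product is bounded above by $\pmax^d$, so triggering must occur by $d \le \log\delta/\log\pmax$. The lazy tree grows to depth at most this value during the iteration; because a best-first expansion adds at most $b$ children per depth level, the set of newly inserted or re-attached vertices in the iteration has size at most $b\,\log\delta/\log\pmax$. Each such vertex incurs at most one rewire under the LPA*-style repair invoked by \textsc{ExtendTree}, giving the claimed per-iteration rewire bound.

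Combining the two per-iteration bounds via linearity of expectation and multiplying by the outer-iteration bound $K$ yields the stated expression. I expect the main obstacle to be the edge-evaluation step: a naive edgewise argument using only $\vectorp(e)\le\pmax$ produces a bound of $1/(1-\pmax)$, which is entirely independent of $\delta$ and fails to exploit the event. Extracting the tighter $1/(1-\delta)$ dependence requires treating the entire subpath as a single random object with survival probability at most $\delta$, and arguing that the worst-case selector still witnesses a geometric decay governed by this aggregate probability rather than any single edge prior -- a step whose looseness is absorbed, if need be, into the $K$ multiplier in the final expression.
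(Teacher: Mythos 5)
Your overall decomposition matches the paper's: charge the work to the $K$ infeasible candidate paths, bound the expected edge evaluations per path by $1/(1-\delta)$, bound the subpath depth at the trigger by $L(\delta)\le \log\delta/\log\pmax$ using $\pmax^{L}\le\delta$, and bound rewires per eliminated path by $bL(\delta)$. The rewire half of your argument is essentially identical to the paper's and is fine.

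The gap is exactly where you suspected it, and your proposed fix does not close it. The claim that $\mathbb{E}[X]\le 1/(1-\delta)$ holds \emph{uniformly over selectors}, derived only from the aggregate constraint $\prod_{e\in\Subpath}\vectorp(e)\le\delta$, is false. Take a subpath of $n$ edges each with prior $p=\delta^{1/n}$ (admissible whenever $\delta^{1/n}\le\pmax$, i.e.\ $n\le\log\delta/\log\pmax$). A sequential selector such as \selectorForward{} has tail probabilities $P(X\ge k)=\delta^{(k-1)/n}$, so
\begin{equation}
\mathbb{E}[X]=\sum_{k=1}^{n}\delta^{(k-1)/n}=\frac{1-\delta}{1-\delta^{1/n}}\approx \frac{n(1-\delta)}{\log(1/\delta)},
\end{equation}
which grows linearly in $n$ and exceeds $1/(1-\delta)$ already for modest $n$ (e.g.\ $\delta=0.5$, $\pmax=0.9$, $n=6$ gives $\mathbb{E}[X]\approx 4.6>2$). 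The joint constraint $\prod\vectorp(e)\le\delta$ cannot be ``collapsed into a single Bernoulli'' because no individual evaluation reveals invalidity with probability $1-\delta$; only the event that \emph{some} edge on the subpath is invalid has probability at least $1-\delta$, and witnessing it may require many evaluations. The paper resolves this not by a tighter probabilistic argument but by an explicit modeling assumption stated at the top of its proof: it assumes an \emph{oracular selector} that, whenever the subpath is truly invalid, evaluates a witnessing invalid edge in a single query. Under that assumption each trigger costs one evaluation and eliminates the subpath with probability at least $1-\delta$, which yields the geometric series $(1-\delta)+2\delta(1-\delta)+3\delta^2(1-\delta)+\cdots=1/(1-\delta)$. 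To complete your proof you must either adopt that oracular-selector assumption (accepting the mismatch with the ``for any \Selector'' phrasing, which the paper shares), or replace $1/(1-\delta)$ by a selector-dependent quantity such as $\min\{1/(1-\delta),\,L(\delta)\}$; your suggestion of absorbing the slack into the $K$ multiplier does not work, since $K$ is fixed by the problem instance and does not scale with the per-path evaluation count.
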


\begin{proof}{}
We first describe the \glrastar algorithm with \eventPathExistence($\delta$). The algorithm searches till the probability of the current shortest subpath drops below $\delta$. It toggles edge evaluation which will either eliminate the subpath or check an edge such that the probability rises above $\delta$. The search continues forward. This repeats till the shortest path has been found. For this proof, we assume we have an oracular selector that can invalidate a subpath if it is truly invalid. 

We begin by upper bounding the number of edge evaluations. Let $\Path^*$ be the shortest feasible path. Let there be $K$ shorter paths than $\Path^*$ that are infeasible and that the algorithm has to eliminate. Since we are showing an upper bound, we can relax the condition that the paths have overlapping edges since they will only reduce edge evaluations (eliminating one implies the other is eliminated).

Consider one of $K$ paths that we have to eliminate. If we pick an edge from the subpath, with probability $1-\delta$ we will find a witness that the path is invalid. A selector either invalidates a subpath with probability $1 - \delta$ or results in a wasted edge evaluation. This process is repeated till a path is eventually eliminated. The expected number of edge evaluated to eliminate the path is:
\begin{equation}
\begin{aligned}
\expect{\vectorp}{\Eeval} &\leq(1-\delta) + 2 \delta (1-\delta) + 3 \delta^2 (1-\delta) + \ldots \\
&\leq(1 - \delta) \left( 1 + 2 \delta + 3 \delta^2 + \ldots \right) \\
&\leq(1 - \delta) \frac{1}{(1 - \delta)^2} \\
&\leq\frac{1}{(1 - \delta)}
\end{aligned}
\end{equation}
Hence the total expected cost of edge evaluation is bounded by $\costEval K \frac{1}{(1 - \delta)}$. Note as $\delta \rightarrow 1$, this term goes to $\infty$. This is backed by the intuition that triggering the event often results in increased edge evaluation.

We will now upper bound the number of vertex rewiring. We assume that the search is using as heuristic the distance on the graph $h_{\graph}(\vertex, \vg)$. 
Hence when one of the $K$ subpaths are eliminated, only the vertices of that subpath is rewired. Since we are deriving an upper bound, we will ignore overlap between subpaths (which can only help). 

Consider one of $K$ paths that we have to eliminate. Let $\pmax$ be the maximum probability of an edge being valid. Then the maximum length of any subpath $L(\delta)$ is 
\begin{equation}
\begin{aligned}
\pmax^{L(\delta)} \geq \delta \\ 
L(\delta) & \leq \frac{\log (\delta)}{\log(\pmax)}
\end{aligned}
\end{equation}

When the subpath is eliminated, the rewiring is restricted only to vertices belonging to the subpath. Hence the maximum vertex rewire that can occur is $\Vexp = b L(\delta)$ where $b$ is the maximum branching factor. A selector either invalidates a subpath with probability $1 - \delta$ and results in rewiring or the process continues without any penalty. The expected number of vertices rewired before the path is eliminated can be upper bounded:
\begin{equation}
\begin{aligned}
\expect{\vectorp}{\Vexp} &\leq(1-\delta)b L(\delta) + \delta (1-\delta) b L(\delta) +  \ldots \\
&\leq(1 - \delta) b L(\delta) \left( 1 + \delta + \delta^2 + \ldots \right) \\
&\leq(1 - \delta) b L(\delta) \frac{1}{(1 - \delta)} \\
&\leq b L(\delta) \\
&\leq \frac{b \log (\delta)}{\log(\pmax)}
\end{aligned}
\end{equation}
Hence the total expected cost of vertex rewiring is upper bounded by $\costExp K b \frac{\log (\delta)}{\log(\pmax)}$. Note as $\delta \rightarrow 0$, this term goes to $\infty$. This is backed by the intuition that triggering the event less often results in increased vertex rewiring.  
\end{proof}


Low values of $\delta$ result in lower edge evaluations but more edge rewiring, and vice-versa.
\begin{cor}
\label{theorem:existence_delta}
There exists a critical threshold $\delta \in (0,1)$ that upper bounds the expected computational cost.
\end{cor}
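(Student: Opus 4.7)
The plan is to treat the right-hand side of Theorem~\ref{theorem:existence_priors} as a function of $\delta$ and show it attains a finite minimum in the open interval $(0,1)$. Define
\[
U(\delta) \;=\; K\!\left( \costEval \frac{1}{1-\delta} \;+\; \costExp\, b\, \frac{\log \delta}{\log \pmax} \right),
\]
where $\log\pmax < 0$ (assuming $\pmax<1$, the interesting case) and $\log\delta <0$, so both summands are positive on $(0,1)$. The expected computational cost is bounded above by $U(\delta)$ for any choice of the threshold $\delta$, so it suffices to produce one $\delta$ for which $U(\delta)$ is finite; in fact we will argue that a minimizer exists.

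First, I would verify that $U$ is continuous (in fact smooth) on $(0,1)$ since both terms are compositions of smooth functions with no singularities in the interior. Next I would record the boundary behaviour already noted after the theorem: as $\delta\to 1^-$, the edge-evaluation term $\frac{1}{1-\delta}\to\infty$, so $U(\delta)\to\infty$; as $\delta\to 0^+$, $\log\delta\to -\infty$, and since $\log\pmax<0$ the rewiring term $\frac{\log\delta}{\log\pmax}\to +\infty$, so again $U(\delta)\to\infty$. Combined with continuity, this forces $U$ to attain its infimum on any compact subinterval $[\epsilon,1-\epsilon]$ chosen so that $U$ exceeds $U(1/2)$ outside it; by the extreme value theorem there is $\delta^\star \in (0,1)$ with $U(\delta^\star)=\min_{\delta\in(0,1)}U(\delta)<\infty$, which is the desired bound.

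If a closed-form critical point is wanted, I would differentiate,
\[
U'(\delta) \;=\; K\!\left(\frac{\costEval}{(1-\delta)^2} \;+\; \frac{\costExp\, b}{\delta \log \pmax}\right),
\]
and set $U'(\delta^\star)=0$, giving the implicit equation $\delta^\star(1-\delta^\star)^2 = -\frac{\costExp\, b}{\costEval \log\pmax}$, whose right-hand side is positive and finite, so a solution in $(0,1)$ exists. This $\delta^\star$ is the critical threshold referenced in the statement.

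The main obstacle is not technical difficulty but interpretational care: one has to observe that the two summands in the upper bound are monotone in opposite directions in $\delta$ (decreasing in $\delta$ for rewiring, increasing for evaluation) and diverge at opposite endpoints, which is exactly the content of the two remarks at the end of the previous proof. Once that monotonicity/divergence is stated, the existence of a finite minimizer $\delta^\star\in(0,1)$ is an immediate consequence of continuity and the extreme value theorem, and the rest is bookkeeping.
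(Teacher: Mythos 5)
Your argument targets the same function as the paper—the upper bound $U(\delta) = K\bigl( \costEval \tfrac{1}{1-\delta} + \costExp \tfrac{b \log \delta}{\log \pmax} \bigr)$ from Theorem~\ref{theorem:existence_priors}—and your derivative matches the paper's. The difference is in how existence is concluded: the paper solves the first-order condition explicitly, reducing it to the quadratic $\delta^2 - \eta\delta + 1 = 0$ with $\eta = \tfrac{\costEval}{b\costExp}\log(1/\pmax) + 2$ and exhibiting the closed-form root $\delta = \tfrac{\eta - \sqrt{\eta^2-4}}{2}$ (then discussing its behavior as $\costEval/\costExp$ varies), whereas you argue existence abstractly via continuity plus divergence of $U$ at both endpoints and the extreme value theorem. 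Your route is actually the more rigorous one for the bare existence claim, since the paper never verifies that its root lies in $(0,1)$ or that it is a minimum; the paper's route buys the closed-form expression and the asymptotic interpretation that the corollary's surrounding discussion relies on. One small slip in your supplementary computation: rearranging $\tfrac{\costEval}{(1-\delta)^2} = -\tfrac{\costExp b}{\delta\log\pmax}$ gives $\tfrac{\delta}{(1-\delta)^2} = -\tfrac{\costExp b}{\costEval\log\pmax}$, not $\delta(1-\delta)^2 = -\tfrac{\costExp b}{\costEval\log\pmax}$; the former maps $(0,1)$ onto $(0,\infty)$ so a root always exists, while your version is bounded above by $4/27$ and would fail for large right-hand sides. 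This does not affect your main argument, which already establishes existence independently.
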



\begin{proof}{}
The total expected planning time can be bounded as:
\begin{equation}
\begin{aligned}
\expect{\vectorp}{\costTotal(\Eeval, \Vexp)} &= \costEval \abs{\Eeval} + \costExp \abs{\Vexp} \\
&= \costEval K \frac{1}{(1 - \delta)} + \costExp K \frac{b \log (\delta)}{\log(\pmax)} \\
&= K \left( \costEval \frac{1}{(1 - \delta)} + \costExp \frac{b \log (\delta)}{\log(\pmax)} \right)
\end{aligned}
\end{equation}

We will now show that there exists a critical point $\delta$ that minimizes this. Solving for that critical point, we have:
\begin{equation}
\begin{aligned}
\frac{\partial}{\partial \delta} \left( K \left( \costEval \frac{1}{(1 - \delta)} + \costExp \frac{b \log (\delta)}{\log(\pmax)} \right) \right) &= 0 \\
\frac{\costEval}{ (1-\delta)^2 } +  \frac{\costExp}{\log(\pmax)} \frac{b}{\delta} &= 0 \\
(1-\delta)^2 - \frac{\costEval}{b \costExp} \log \left(\frac{1}{\pmax}\right) \delta &= 0 \\
\delta^2 - \left(\frac{\costEval}{b \costExp} \log (\frac{1}{\pmax}) + 2\right) \delta + 1 &= 0\\
\end{aligned}
\end{equation}

Let $\eta = \left(\frac{\costEval}{b\costExp} \log (\frac{1}{\pmax}) + 2\right)$. The critical point is:
\begin{equation}
\delta = \frac{\eta - \sqrt{\eta^2 - 4}}{2} 
\end{equation}
When $\costEval \approx \costExp$, $\eta$ is close to $2$ and $\delta \rightarrow 1$. When $\costEval \gg \costExp$, we have $\eta \gg 2$. Hence, the critical point is:
\begin{equation}
\begin{aligned}
\delta &= \frac{\eta - \sqrt{\eta^2 - 4}}{2} \\
&= \frac{\eta \left(1 - \sqrt{1 - \frac{4}{\eta^2}} \right)}{2} \\
&= \frac{\eta \left(1 - \left(1 - \frac{4}{2\eta^2} \right) \right)}{2} \\
&= \frac{1}{\eta} \\
&= \frac{1}{  \left(\frac{\costEval}{b\costExp} \log (\frac{1}{\pmax}) + 2\right) }\
\end{aligned}
\end{equation}

The critical point is inversely proportional to the ratio $\frac{\costEval}{\costExp}$. 

\end{proof}

\paragraph*{Selector.} The \Selector invalidates as many subpaths as quickly as possible, which restricts the size of \Eeval. 
One strategy for doing so is to invalidate the current subpath as quickly as possible. 
We describe a selector, \selectorGreedy (\texttt{FF}), in Algorithm~\ref{alg:selector_definitions} that evaluates the edge on the subpath with the highest probability of being in collision. We show that this selector is the optimal strategy to invalidate a subpath.

\begin{thm}
Given a path \Path{}, \selectorGreedy{} minimizes the expected number of edges from \Path that must be evaluated to invalidate \Path.
\label{theorem:greedy_selector}
\end{thm}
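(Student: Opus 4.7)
The plan is to express the expected number of edge evaluations as a sum of prefix products of edge-validity probabilities, and then to minimize this expression by an adjacent-swap (exchange) argument that recovers the greedy ordering. Let $\Path = (e_1, \ldots, e_k)$ and let $\pi$ be a permutation specifying the order in which edges are evaluated, with $q_i := \vectorp(e_{\pi(i)})$ the validity probability of the $i$-th evaluated edge. Since evaluation stops the moment an invalid edge is found, the number of evaluations $N$ is a random variable whose tail probabilities are straightforward: $P(N \geq i)$ equals the probability that the first $i-1$ evaluated edges are all valid. By independence, this is $\prod_{j=1}^{i-1} q_j$.

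The first step is therefore to write
\begin{equation*}
\expect{\vectorp}{N} \;=\; \sum_{i=1}^{k} P(N \geq i) \;=\; \sum_{i=1}^{k} \prod_{j=1}^{i-1} q_j,
\end{equation*}
using the standard identity that expresses the expectation of a nonnegative integer-valued random variable as the sum of its tail probabilities (with the empty product taken to be $1$). The second step is an exchange argument: I would compare two orderings that differ only by a swap of the probabilities in positions $i$ and $i+1$. All terms in the sum for indices $l \neq i+1$ are unchanged, because they either do not involve these positions at all, or they involve both $q_i$ and $q_{i+1}$ as a product (commutative in either order). Only the term with index $i+1$ changes, and the difference (after minus before) equals $\left(\prod_{j=1}^{i-1} q_j\right)(q_{i+1} - q_i)$.

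The third step is to conclude: whenever an ordering places a larger validity probability before a smaller one at adjacent positions, swapping them strictly decreases the expected number of evaluations. By the usual bubble-sort-style argument, the expectation is therefore minimized by sorting edges in nondecreasing order of $\vectorp(e)$, equivalently nonincreasing order of collision probability $1 - \vectorp(e)$. At each step, the minimum is achieved by picking the smallest remaining $\vectorp(e)$, which is exactly the rule of \selectorGreedy{} in Algorithm~\ref{alg:selector_definitions}.

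The main subtlety to handle carefully is that \selectorGreedy{} as stated is a \emph{myopic} rule that only picks the minimum-$\vectorp$ edge on the current subpath at the moment of evaluation, rather than committing to a full permutation up front. I would therefore make explicit that the edge probabilities on $\Path$ are fixed and independent across edges, so that the subproblem after each evaluation has the same structure as the original one on a shorter path; consequently the greedy choice at every step agrees with the globally optimal sorted order, and the myopic selector inherits optimality. No other obstacle arises; the argument is a clean application of the classical scheduling-by-ratio/exchange lemma specialized to Bernoulli edges.
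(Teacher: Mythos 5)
Your proof is correct and takes essentially the same route as the paper's: an adjacent-swap exchange argument showing that the expected number of evaluations is minimized by ordering the edges of $\Path$ in nondecreasing order of $\vectorp(e)$, which is what \selectorGreedy{} implements myopically. The only difference is cosmetic --- you express the expectation via the tail-sum identity $\sum_i P(N \geq i) = \sum_i \prod_{j<i} q_j$ rather than the paper's direct form $\sum_l l\,(1-p_l)\prod_{m<l} p_m$; the two objectives differ by a permutation-invariant term, so the swap computation is the same (yours is marginally cleaner since only one summand changes).
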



\begin{proof}{}
Given a path $\Path$, and a sequence of edges $S = \{e_1, e_2, \ldots, e_n\}$ belonging to the path,
and the corresponding priors of the edges being valid $(p_1, p_2, \ldots, p_n)$, 
let the expected number of edge evaluations to invalidate the \Path be $\Eeval(S)$ which is given by
\begin{equation}
\begin{aligned}
\expect{\vectorp}{ \Eeval(S) } &= (1-p_1) + 2 p_1 (1-p_2) + \ldots \\
&= \sum_{l=1}^{n} \left( \prod_{m=1}^{l-1} p_m \right) \left(1-p_l\right)l 
\end{aligned}
\label{eqn:greedy_expected_edge_evaluations}
\end{equation}
Without loss of generality, let $p_i > p_{i+1}$ for a given $i$. Consider the alternate sequence of evaluations $S' = \{e_1, e_2, \ldots, e_{i+1}, e_i \ldots, e_n\}$ where the positions of the edges $e_i,~e_{i+1}$ are swapped. 
Consider the difference:
\begin{equation}
\begin{aligned}
& \expect{\vectorp}{ \Eeval(S) } - \expect{\vectorp}{ \Eeval(S') } \\
&= \ldots + \prod_{m=1}^{i-1} p_m \left[ (1-p_i) i + p_i(1-p_{i+1})(i+1) \right] + \ldots \\
&- \ldots + \prod_{m=1}^{i-1} p_m \left[ (1-p_{i+1}) i + p_{i+1}(1-p_{i})(i+1) \right] + \ldots \\
&= \prod_{m=1}^{i-1} p_m \left[-i(p_i - p_{i+1}) + (i+1)(p_i - p_{i+1}) \right] \\
&= \prod_{m=1}^{i-1} p_m (p_i - p_{i+1}) \\
& > 0
\end{aligned}
\label{eqn:greedy_expected_edge_evaluations}
\end{equation}
Since each such swap results in monotonic decrease in the objective, there exists an unique fixed point, i.e., the optimal sequence $S^*$ has $p_1 \leq p_2 \leq \ldots \leq p_n$.
\end{proof}

\subsection{Hypotheses}
\label{sec:hypothesis}

Based on our theoretical analysis and insight, we state three hypotheses that we intend to test:
\begin{hypothesis}
For any \Selector, the event \eventPathExistence{} requires less planning time compared to \eventShortestPath and \eventConstantDepth{}.
\label{hyp:1}
\end{hypothesis}
This follows from Theorem~\ref{theorem:existence_priors}, which upper bounds the planning time for \eventPathExistence. \eventShortestPath corresponds to $\delta = 0$ and can increase planning time. \eventConstantDepth{} has a fixed lookahead and does not adapt as priors change. 

\begin{hypothesis}
For any \Event, \selectorGreedy{} evaluates fewer edges than \selectorForward{} and \selectorAlternate{}.
\label{hyp:2}
\end{hypothesis}
This follows from Theorem~\ref{theorem:greedy_selector}, which shows that \selectorGreedy{} is optimal in expectation for eliminating a path. From \hypref{hyp:1} and \hypref{hyp:2}, we hypothesize that the combination of \eventPathExistence{} and \selectorGreedy{} will have the lowest planning time.

\begin{hypothesis}
The performance gain of \eventPathExistence{} over \eventShortestPath increases with both graph size and problem difficulty.
\label{hyp:3}
\end{hypothesis}
\eventShortestPath assumes that $\Vexp$ is negligible. As graph size increases, the size of vertices $\Vexp$ that \eventShortestPath rewires also increases. Similarly, as problem difficulty increases, so does the number of shortest paths that \eventShortestPath must invalidate, which also increases $\Vexp$. \eventPathExistence{}, on the other hand, makes no such assumption.

\section{Experiments}
\label{sec:experiments}


\paragraph{Algorithm Details.} 
We implemented 3 \Event{}s and 3 \Selector{}s described in Algorithms \ref{alg:event_definitions} and \ref{alg:selector_definitions} to get a total of $9$ algorithms. To analyze the trade-offs, we test on a diverse set of $\R^2$ datasets. We then finalize on 3 algorithms: \lazySP (\eventShortestPath,~\selectorGreedy), \lrastar(\eventConstantDepth,~\selectorGreedy) and \glrastar(\eventPathExistence,~\selectorGreedy). We evaluate these on a Piano Movers' problem in $SE(2)$ and manipulation problems in $\R^7$ using HERB \cite{Srinivasa2009}, a mobile robot with 7DoF arms. \footnote{Code is publicly available as an OMPL Planner at: \\ \url{https://github.com/personalrobotics/gls}}




\begin{figure*}[!t]
\centering
\includegraphics[width=\textwidth]{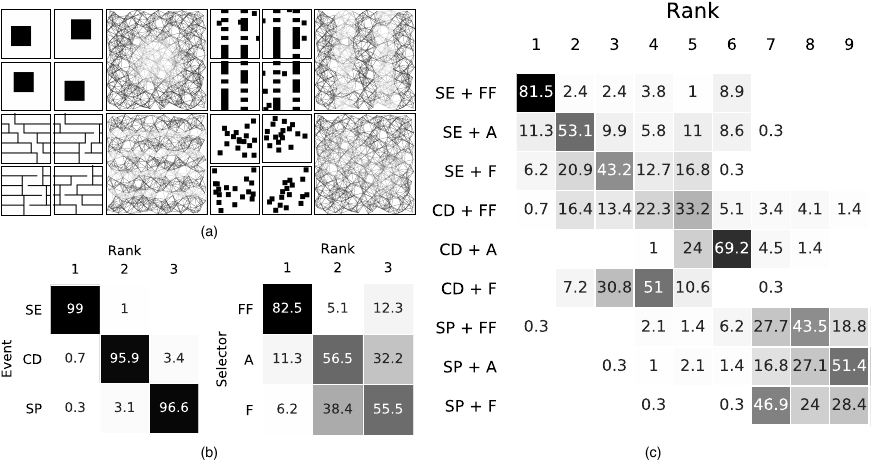}
\caption{(a) Samples and the prior for $\R^2$ datasets. (b) Events, Selectors and (c) Algorithms, ranked by planning times on $\R^2$ environments. Each cell indicates percentage of problems on which corresponding rank has been obtained.}
\label{fig:prior-ranking}
\end{figure*}

\begin{table*}[!htpb]
\small
\centering
\begin{tabulary}{\textwidth}{LCCCCCCCCC}
\toprule
	                             & {\glsspf}      & {\glsspa} & {\glsspff} & {\glscdf}      & {\glscda}  & {\glscdff} & {\glssef} & {\glssea} & {\glsseff}          \\ \midrule
\textbf{Square}                &                &           &            &                &            &            &           &           &                     \\
\emph{Total Planning Time}     &          0.331 & 0.454     & 0.372      & 0.221          & 0.259      & 0.222      &  0.171    &  0.161    &  \textbf{0.116}     \\
\emph{\# Edge Evaluations}  	 & 190 &  308 & 137 & 273 &  344.5 & 315.5 & 200.5 & 206 & 153 \\
\emph{\# Vertex Rewires}    	 &  7058.5 & 8502.5 & 9859 & 1076.5 & 641.5 & 69.5 & 1104.5 & 603.5 & 318.5 \\
\vspace{0.5mm}

\textbf{Two Wall}              &                &           &            &                &            &            &           &           &                   \\
\emph{Total Planning Time}     &          0.419 &	0.394 &	0.377 &	0.262 &	0.301 &	0.290 &	0.220 &	0.169 &	\textbf{0.161} \\
\emph{\# Edge Evaluations}  	 &  224 & 242.5 & 144 & 310 & 393 & 407 & 287 & 224.5 & 202.5 \\
\emph{\# Vertex Rewires}    	 & 9360 & 7997 & 9870 & 1594.6 & 914.5 & 165.5 & 697 & 435 & 711.5 \\
\vspace{0.5mm}

\textbf{Mazes}                 &                 &           &            &                &            &            &           &           &                     \\
\emph{Total Planning Time}     &          1.334 &	1.292 &	1.272 &	0.574 &	0.776 &	0.560 &	0.615 &	0.578 &	\textbf{0.337} \\
\emph{\# Edge Evaluations}  	 & 531 & 471 & 307.5 & 630 & 895 & 785.5 & 588 & 544.5 & 352.5 \\
\emph{\# Vertex Rewires}    	 &  34359 & 34379.5 & 37750 & 4769 & 5357.5 & 326 & 7266.5 & 7039.5 & 3213 \\
\vspace{0.5mm}

\textbf{Forest}                &                &           &            &                &            &            &           &           &                     \\
\emph{Total Planning Time}     &          0.277 &	0.267 &	0.269 &	0.574 &	0.776 &	0.559 &	\textbf{0.219} &	0.234 &	0.229 \\
\emph{\# Edge Evaluations}  	 & 174.5 & 184.5 & 165.5 & 306.5 & 342.5 & 450.5 & 190 & 220 & 174.5 \\
\emph{\# Vertex Rewires}    	 & 5524.5 & 4936 & 5467.5 & 579 & 346 & 95.5 & 3075 & 2855 & 3827.5 \\


\bottomrule
\end{tabulary}
\caption{Planning Time(millisec.) and number of operations by algorithms under \glrastar{}. (Median reported on 100 tests.)}
\label{tab:dataset_results}
\end{table*}

\begin{figure*}[!thb]
\centering
\includegraphics[width=0.9\textwidth]{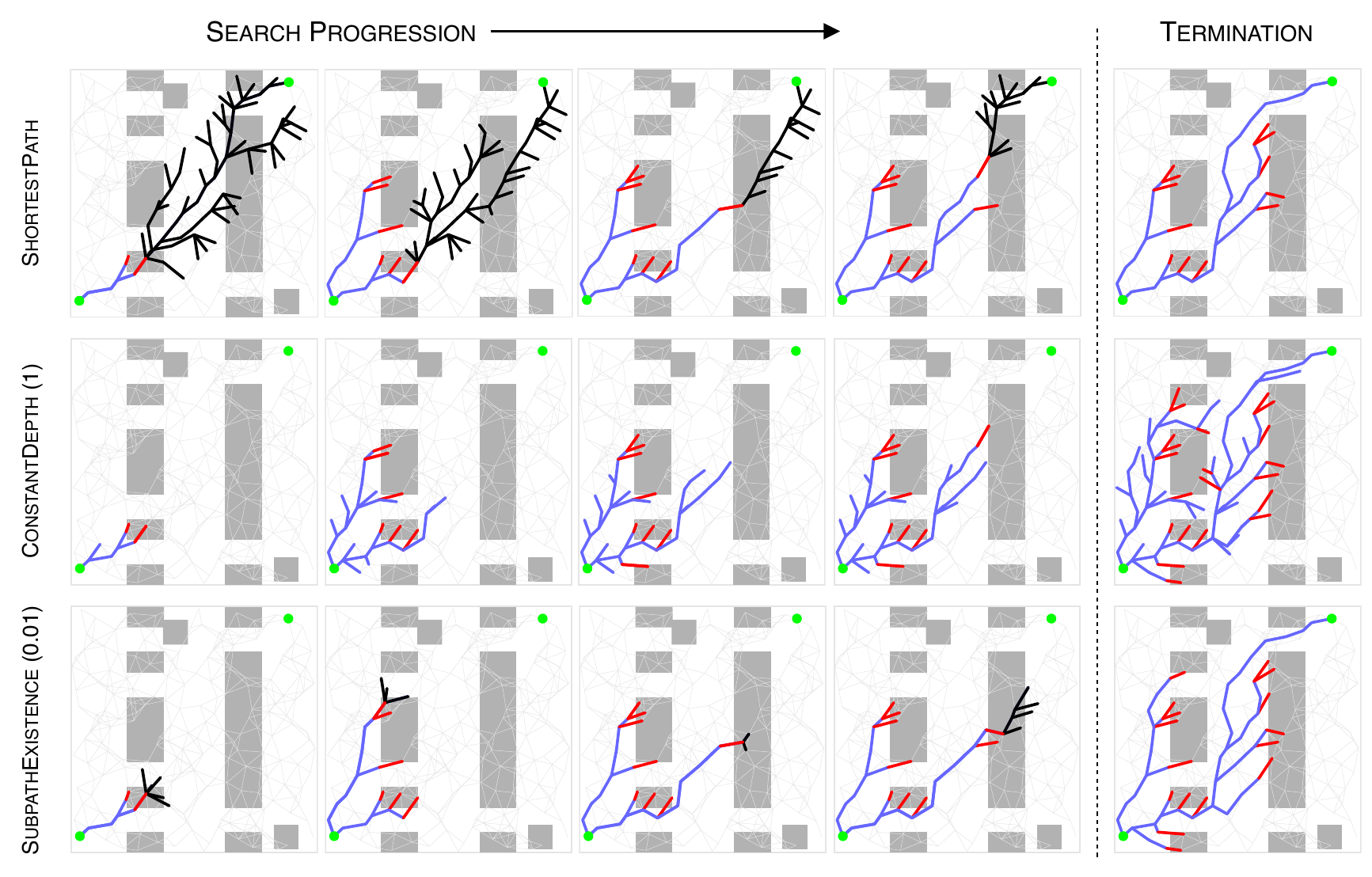}
\caption{Snapshots of search and evaluation by \glrastar{} with \selectorForward{} selector and different events. 
Edges evaluated to be valid (blue), invalid (red) and subtree of vertices to be rewired (black) are shown. From top to bottom at termination: the number of edge evaluations are (49, 97, 62) and the number of vertex rewires are (361, 21, 69).}
\label{fig:comparison}
\end{figure*}

\begin{figure}[!htb]
\centering
\begin{subfigure}[b]{0.45\linewidth}
  \centering
  \includegraphics[width=0.9\linewidth]{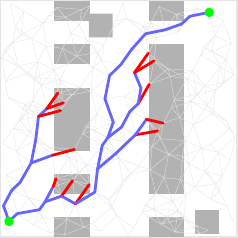}
\end{subfigure}
\hfill
\begin{subfigure}[b]{0.45\linewidth}
  \centering
  \includegraphics[width=0.9\linewidth]{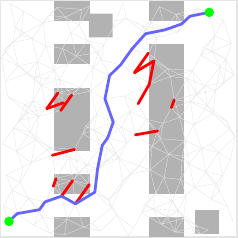}
\end{subfigure}
\caption{All valid (blue) and invalid (red) edges evaluated at termination of \glrastar{} with \selectorForward{} (49 edges) and \selectorGreedy{} (32 edges) with \eventPathExistence{}.}
\label{fig:forward_greedy_selectors}
\end{figure}

\begin{figure*}[!thb]
\centering
\begin{subfigure}[b]{0.256\textwidth}
  \centering
    \includegraphics[width=\textwidth]{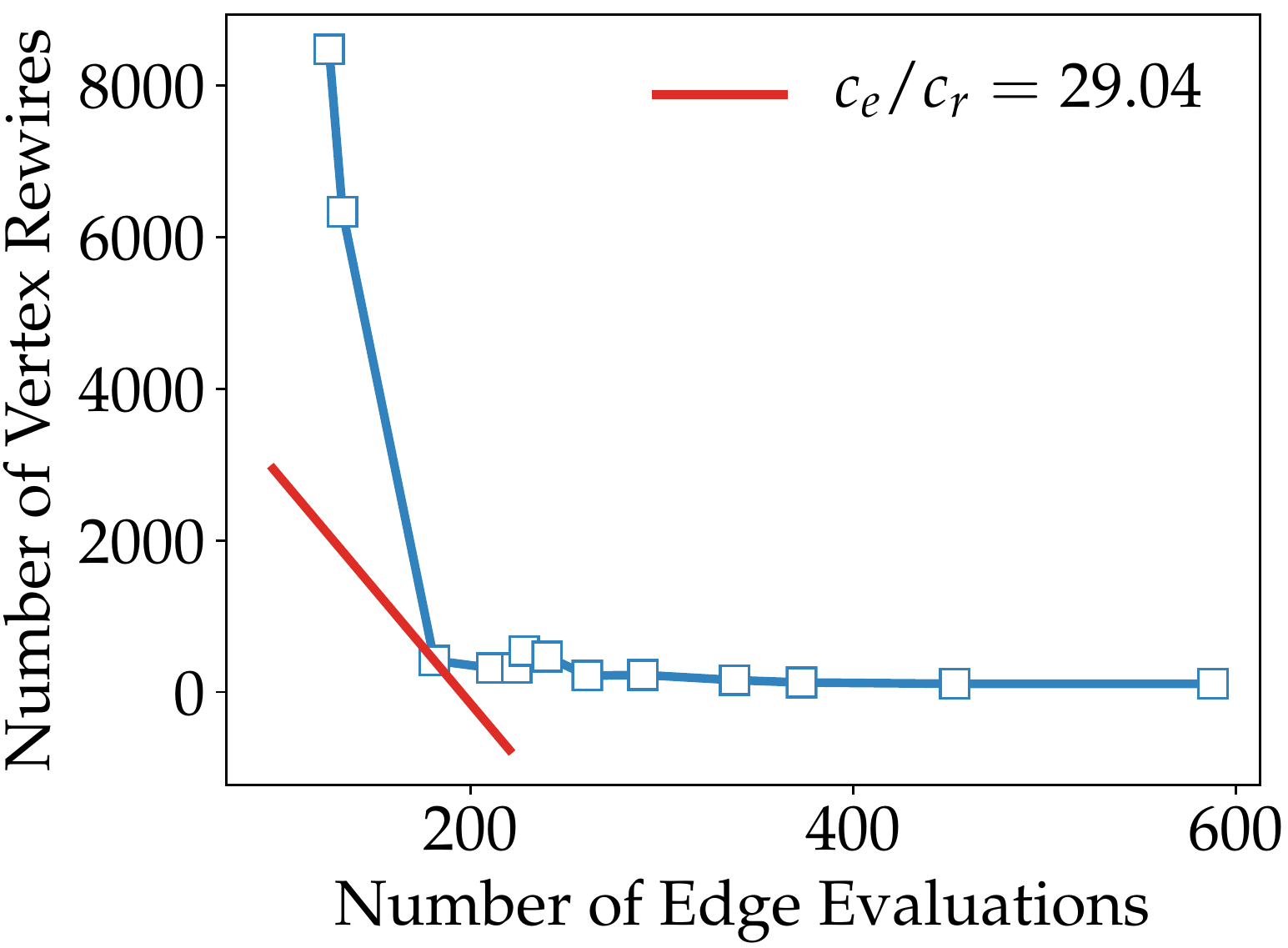} 
   \caption{\label{fig:performance:pareto}}
\end{subfigure}
\begin{subfigure}[b]{0.252\textwidth}
  \centering
  \includegraphics[width=\textwidth]{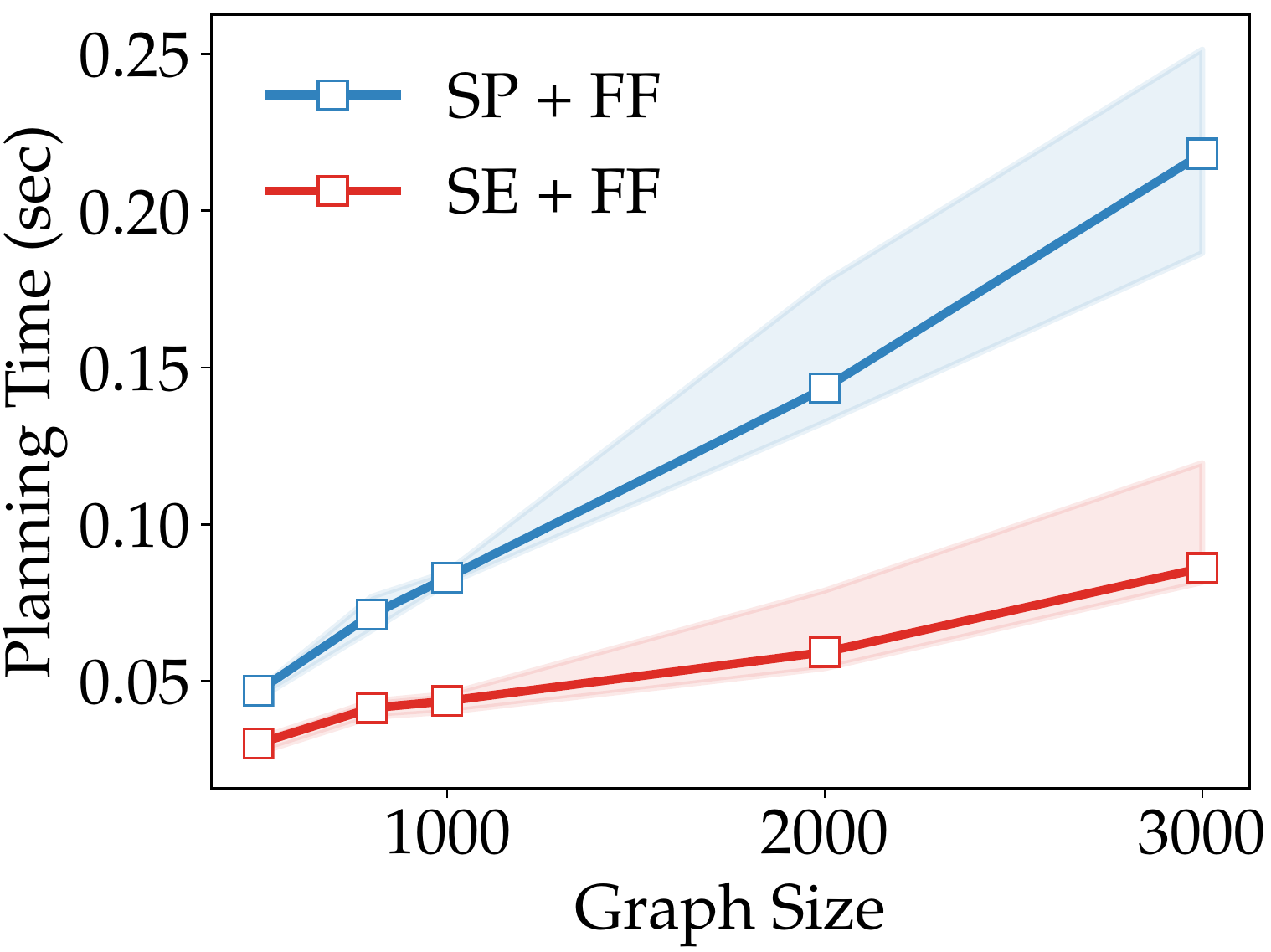}
     \caption{\label{fig:performance:graph_size}}
\end{subfigure}
\begin{subfigure}[b]{0.247\textwidth}
  \centering
  \includegraphics[width=\textwidth]{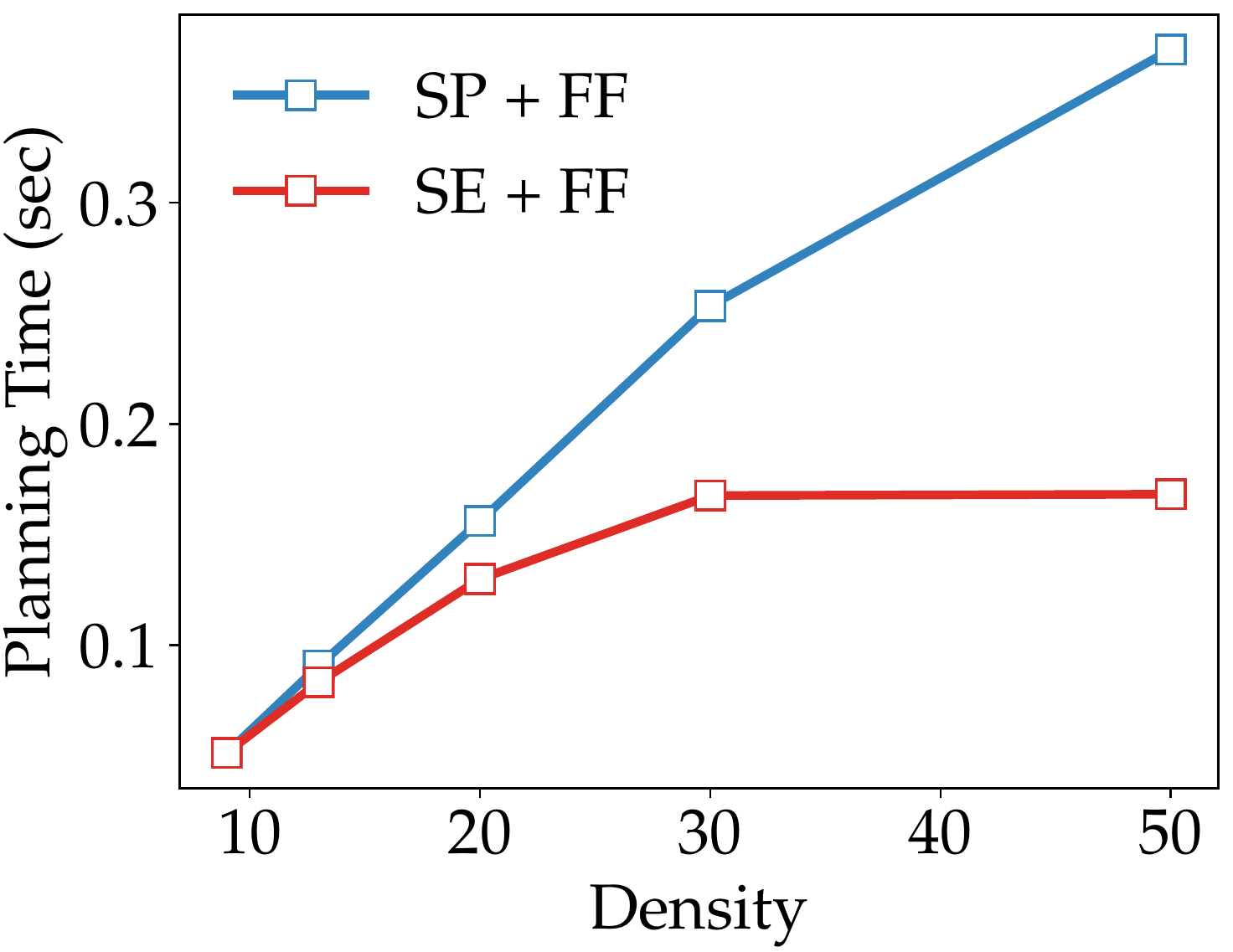}
     \caption{\label{fig:performance:difficulty}}
\end{subfigure}
\begin{subfigure}[b]{0.228\textwidth}
  \centering
  \includegraphics[height=0.9\textwidth]{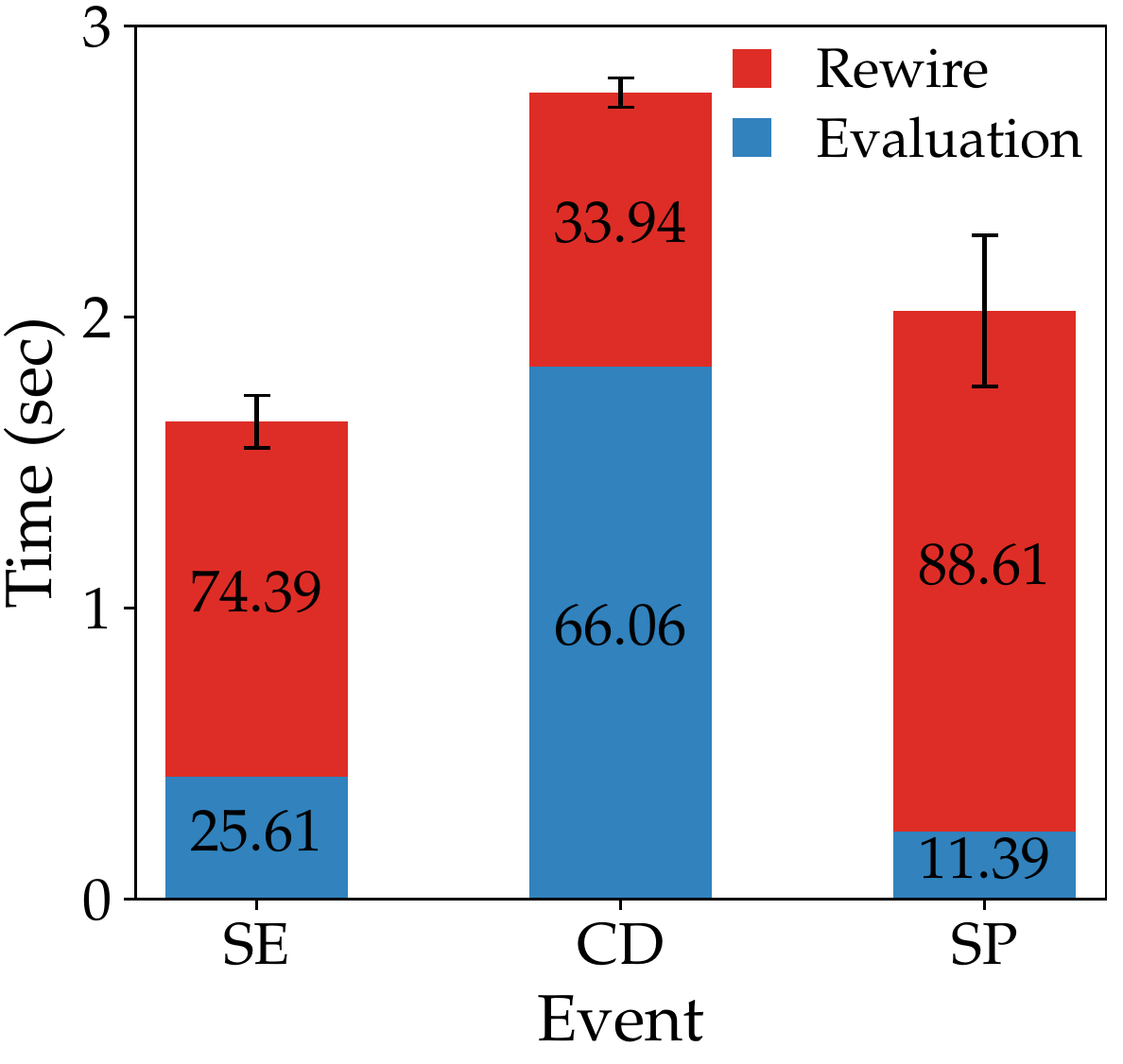}
     \caption{\label{fig:performance:bar_plot}}
\end{subfigure}
\caption{(a) Pareto curve obtained by varying $\delta$ in \eventPathExistence (b) Planning times as the size of the graph in TwoWall is increased. (c) Planning time as the density of obstacles in Forest is increased (d) Planning time in $\R^7$ problem (with 95\% C.I.)}
\label{fig:performance}
\end{figure*}

\paragraph{Analysis on $\R^2$ datasets.}

We use $5$ datasets of $\R^2$ problems from \cite{SC17}. Each dataset corresponds to different parametric distribution of obstacles from which we sample $1000$ worlds. A graph of 2000 vertices is sampled using a low dispersion sampler~\cite{Halton64} with an optimal connection radius~\cite{JSCP15}. Priors are computed by collision checking the graph on the training data and averaging edge outcomes. The prior and some samples from $\R^2$ datasets are shown in Fig.~\ref{fig:prior-ranking}(a). We pick one representative dataset, TwoWall, to show detailed plots.


We choose evaluation metrics (a) number of edge evaluations (b) number of vertex rewires and c) total planning time: weighted combination of (a), (b) (see Eq. \ref{eq: computation_cost_definition}). Since $\R^2$ problems are \emph{not expensive to evaluate}, we choose weights based on empirical data from manipulation planning problems in $\R^7$ (avg. eval time: $3.35\times 10^{-4}$s, avg. rewire time $1.1\times 10^{-5}$s, ratio $29.04$).

Finally, for parameter selection, we choose $\delta$ in \eventPathExistence{} from the pareto curve of vertices rewired vs edges evaluated computed on the training data. The slope of the line is the ratio of their relative cost -- the point of interesection corresponds to the $\delta:0.01$ that minimizes planning time. For \eventConstantDepth{}, we use the recommended value from \cite{Mandalika18}.

Table \ref{tab:dataset_results} shows the planning times of various algorithms under \glrastar{}. The planning times are the median quantities obtained from experiments over 100 different worlds sampled within the environment type. Fig.~\ref{fig:prior-ranking}(c) shows the ranking of the planning times of the algorithms across the 400 worlds considered across the four datasets (lower plannning time in a problem translates to a better rank). We note that \glrastar{} with \eventPathExistence{} and \selectorGreedy{} consistently outperforms remaining algorithms on a majority of environments.


We found strong evidence to support \hypref{hyp:1} - \eventPathExistence exhibits lowest planning times in 99\% of the problems (Fig.~\ref{fig:prior-ranking}(b, left)) Corresponding median planning times supporting the hypothesis are reported in Table \ref{tab:dataset_results}. Fig.~\ref{fig:comparison} shows a comparison of the events \eventShortestPath, \eventConstantDepth and \eventPathExistence (for the \selectorForward selector) on a problem from the TwoWall dataset. We can see that \eventShortestPath checks small number of edges but rewires significant portion of the search tree. The trend is reversed in \eventConstantDepth when using a depth of $1$. \eventPathExistence is able to balance both by exploiting priors - it triggers events when the search reaches the walls thus reducing rewires.

We also found strong evidence to support \hypref{hyp:2} - \selectorGreedy exhibits the lowest planning times in 83\% (Fig.~\ref{fig:prior-ranking}(b, right)) of the problems across the four datasets. In Table \ref{tab:dataset_results}, we note that for a given event, \selectorGreedy{} has the lowest planning time in majority of the datasets. Fig.~\ref{fig:forward_greedy_selectors} shows a comparison of \selectorForward and \selectorGreedy (for \eventShortestPath event) - \selectorGreedy quickly eliminates paths by checking the weakest link (supporting Theorem~\ref{theorem:greedy_selector}).

We found strong evidence to support \hypref{hyp:3}. Fig.~\ref{fig:performance:graph_size} shows that as graphs get larger, planning times of \eventShortestPath grows at a faster rate than \eventPathExistence. Fig.~\ref{fig:performance:difficulty} shows that as the density of obstacles increase, the planning times of \eventShortestPath grows linearly while \eventPathExistence eventually saturates. 


\begin{figure}
\centering
\includegraphics[width=0.6\linewidth]{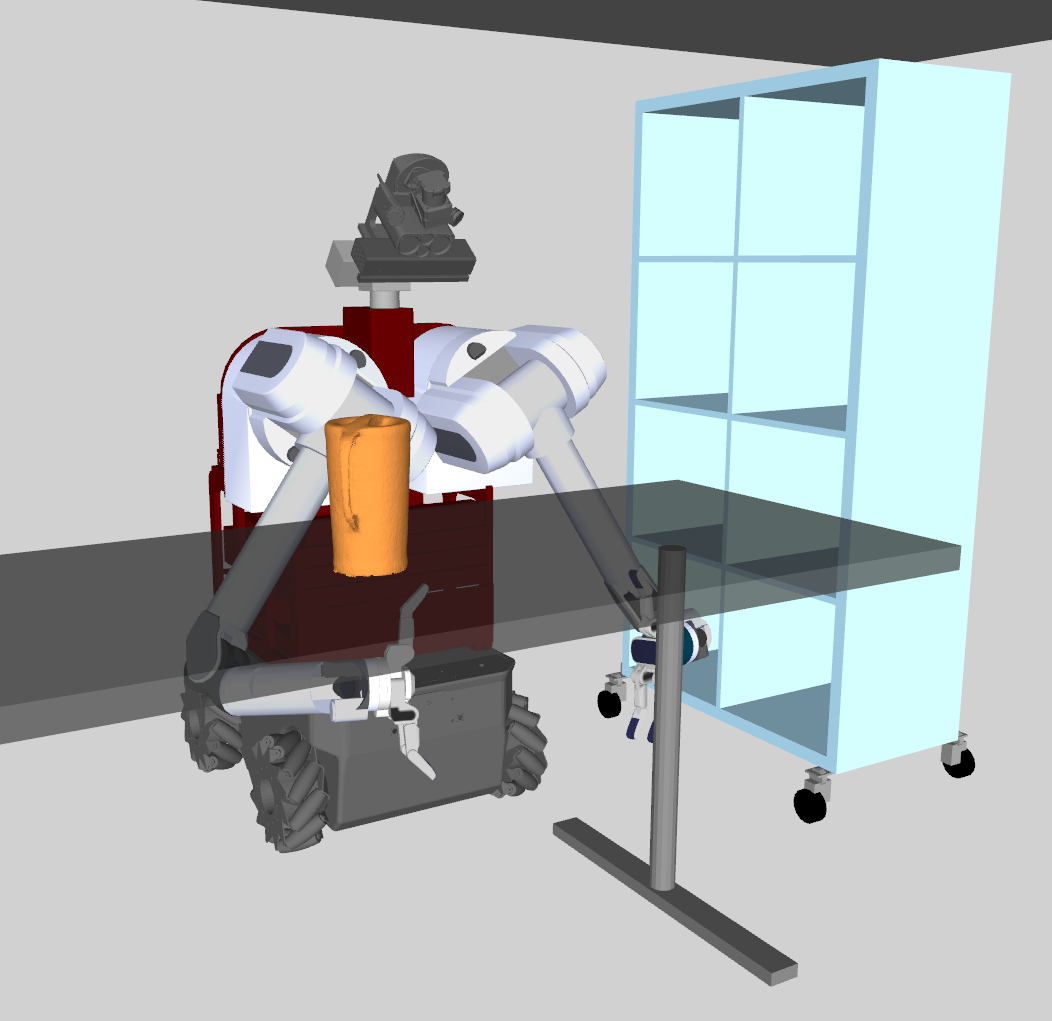}
\caption{HERB Task 1: Robot reaches into the shelf with its right arm while avoiding the table and the object.}
\label{fig:herb_scene_actual}
\end{figure}

\paragraph{Analysis on $SE(2)$ problems and $\R^7$ problems.}

\begin{table}[!htb]
\small
\centering
\caption{Mean Planning Times (in seconds) of \glrastar{}, \lazySP and \lrastar on $SE(2),~\R^7$ problems.}
\begin{tabulary}{\textwidth}{LCCCCCC}
\toprule
	                           & {\glrastar{}}  & {\lrastar} & {\lazySP} \\ \midrule
\textbf{Piano Movers'}       &                &           &          \\
\emph{Total Planning Time}   & \textbf{1.00}  &  1.13     & 1.25     \\
\emph{Edge Evaluation Time}  &         0.17   &  0.49     & 0.10     \\
\emph{Vertex Rewire Time}    &         0.83   &  0.64     & 1.15     \\
\vspace{0.5mm}
\textbf{HERB Task 1}         &                &           &          \\
\emph{Total Planning Time}   & \textbf{1.17}  &  1.81     & 1.53     \\
\emph{Edge Evaluation Time}  &         0.38   &  1.19     & 0.29     \\
\emph{Vertex Rewire Time}    &         0.79   &  0.62     & 1.24     \\
\vspace{0.5mm}
\textbf{HERB Task 2}         &                &           &          \\
\emph{Total Planning Time}   & \textbf{1.64}  &  2.77     & 2.02     \\
\emph{Edge Evaluation Time}  &         0.42   &  1.83     & 0.23     \\
\emph{Vertex Rewire Time}    &         1.22   &  0.94     & 1.79     \\

\bottomrule
\end{tabulary}
\label{tab:high_problems}
\end{table}

\begin{figure*}[!thb]
\centering
\begin{subfigure}[b]{0.32\textwidth}
  \centering
  \includegraphics[width=0.7\textwidth]{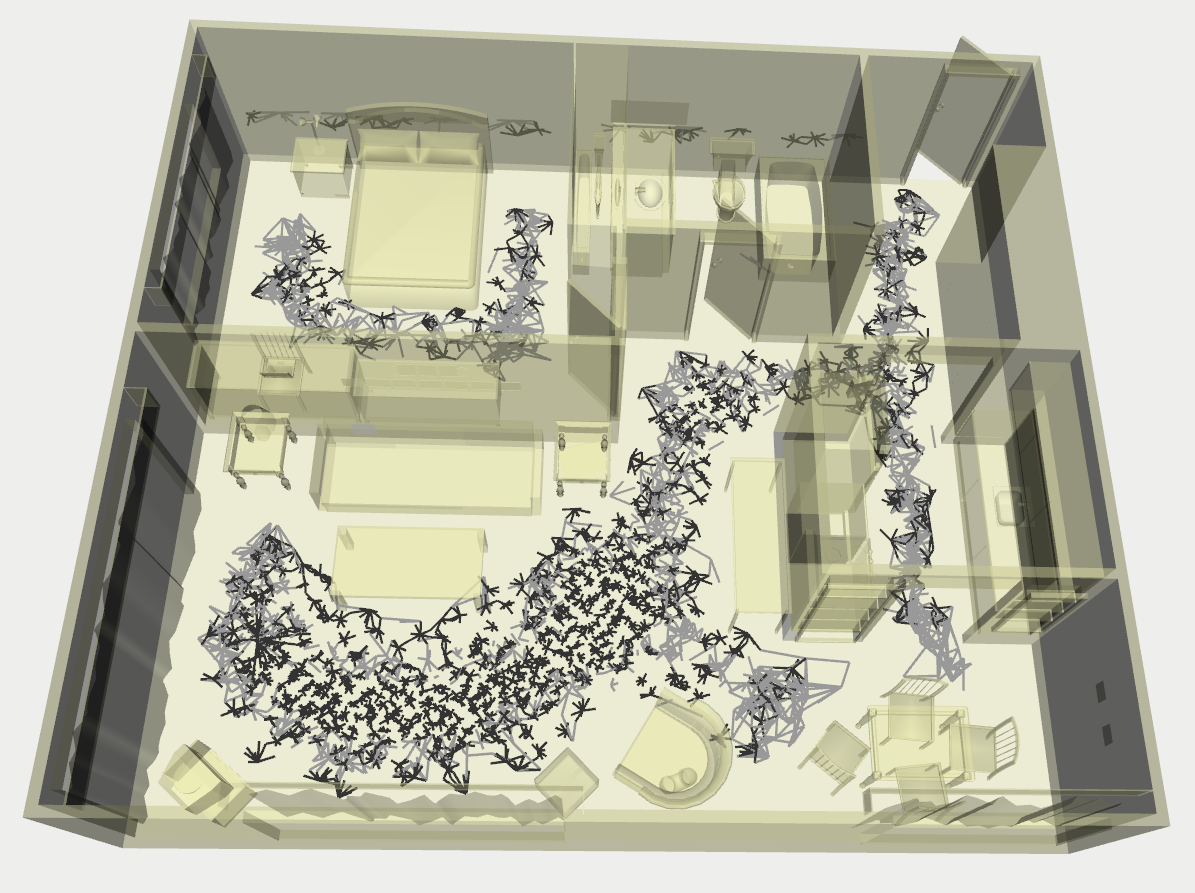}
  \subcaption{}  
\end{subfigure}
\begin{subfigure}[b]{0.32\textwidth}
  \centering
  \includegraphics[width=0.7\textwidth]{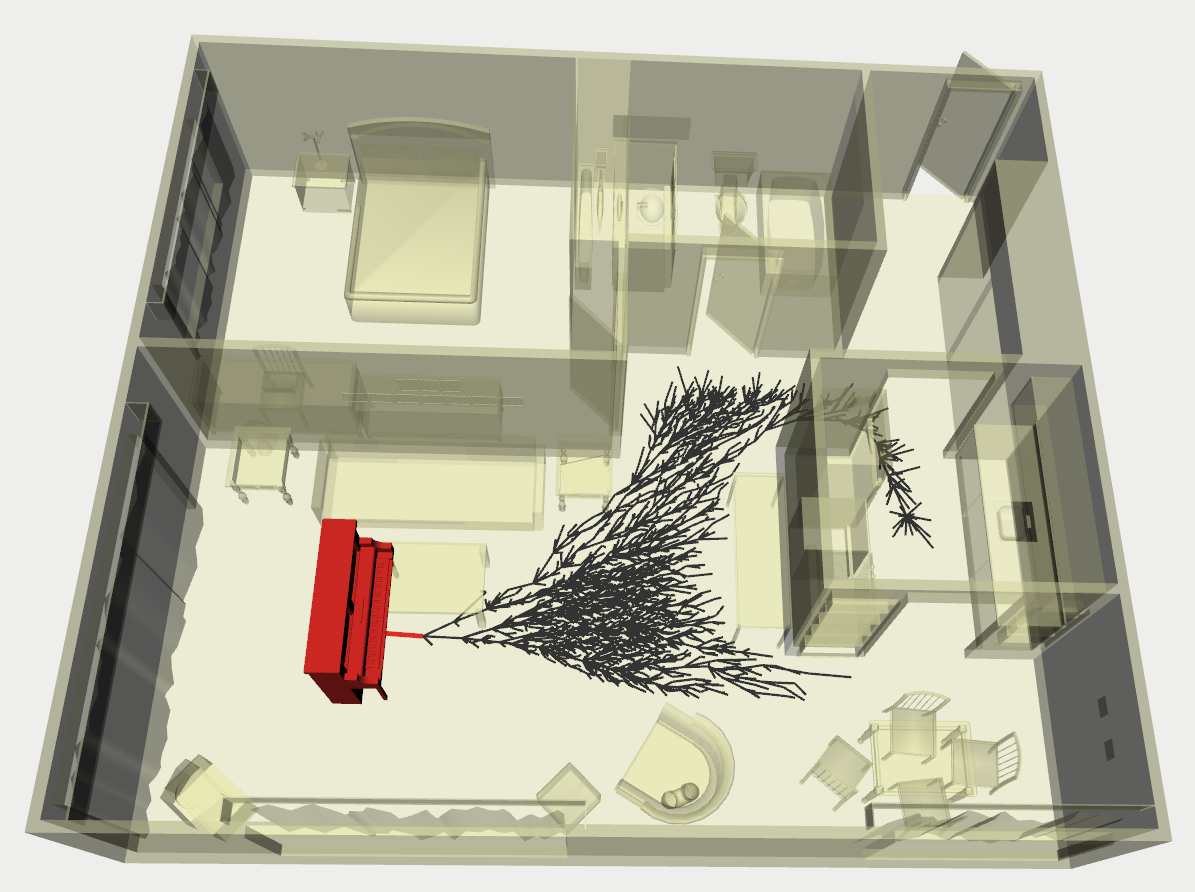}
  \subcaption{}
\end{subfigure}
\begin{subfigure}[b]{0.32\textwidth}
  \centering
  \includegraphics[width=0.7\textwidth]{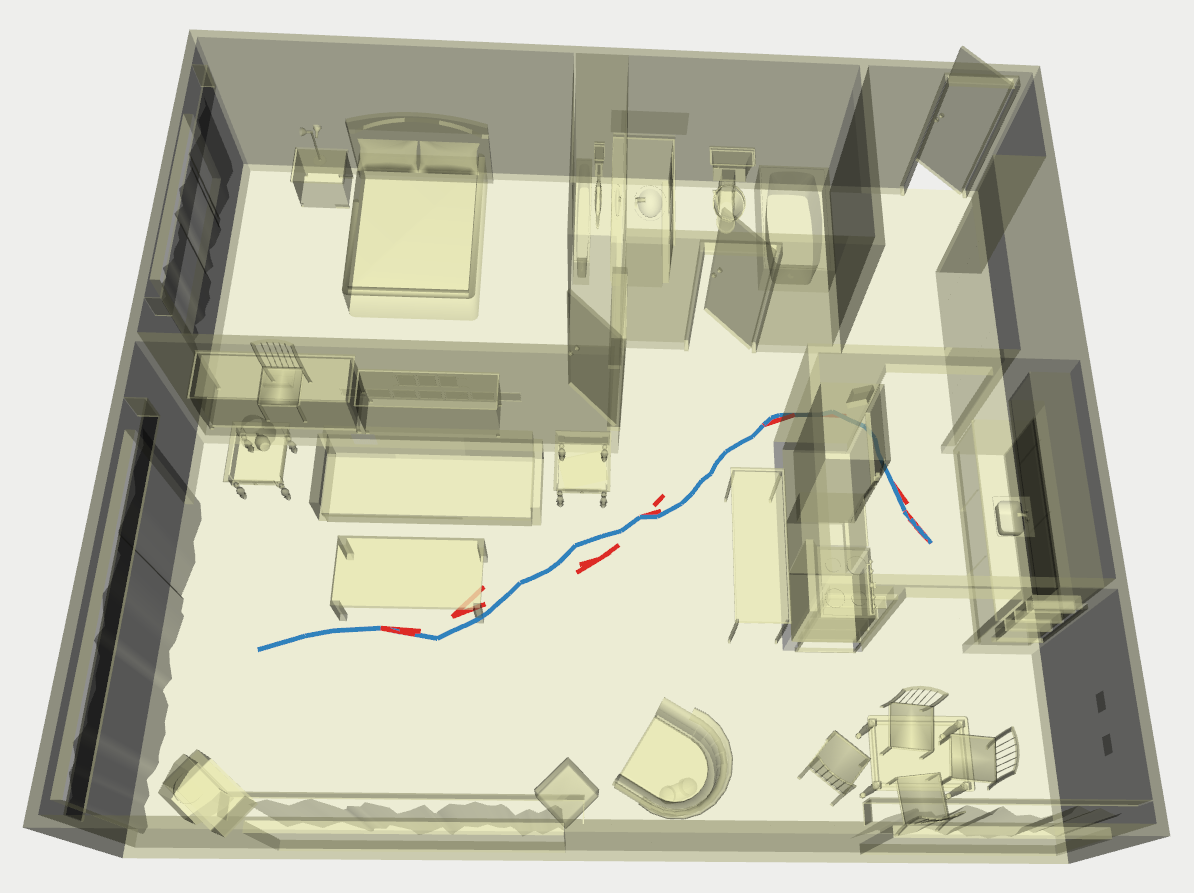}
  \subcaption{}
\end{subfigure}

\begin{subfigure}[b]{0.32\textwidth}
  \centering
  \includegraphics[width=0.7\textwidth]{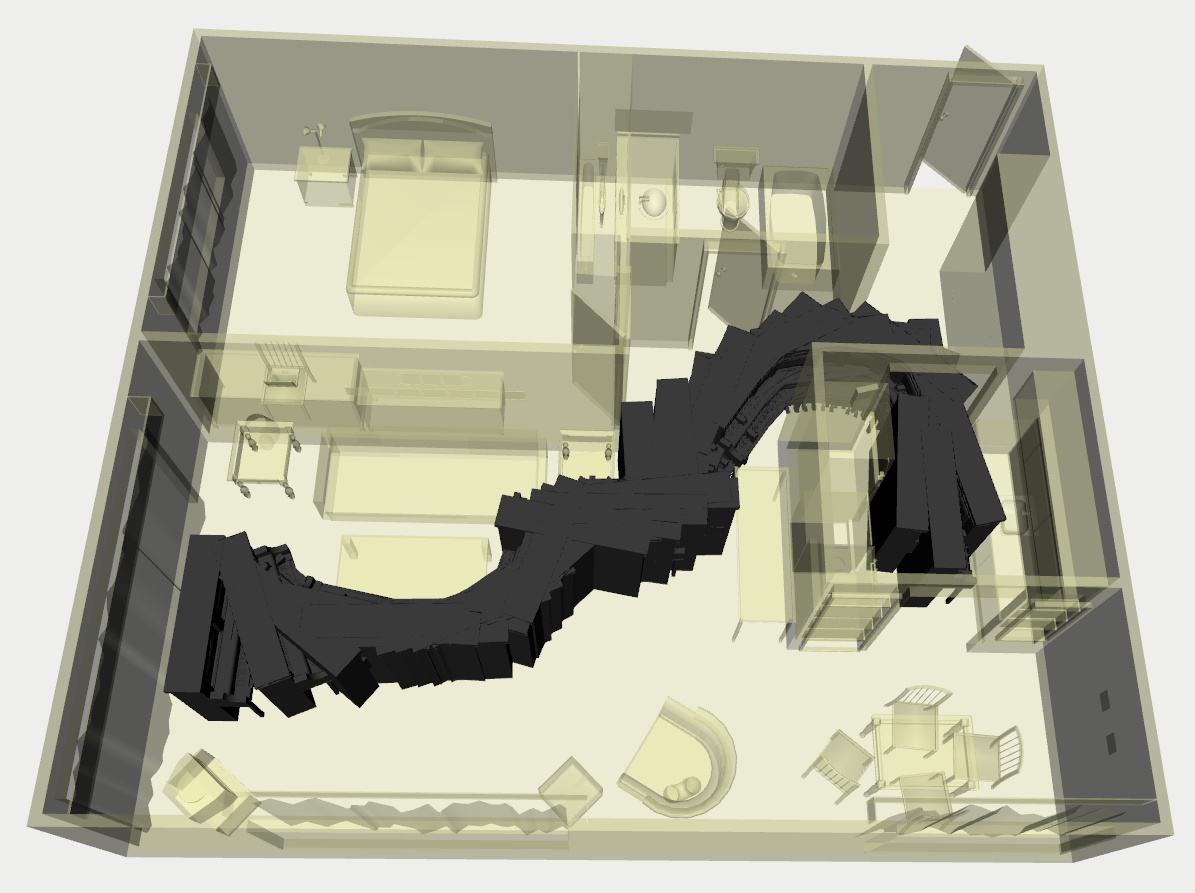}
  \subcaption{}
\end{subfigure}
\begin{subfigure}[b]{0.32\textwidth}
  \centering
  \includegraphics[width=0.7\textwidth]{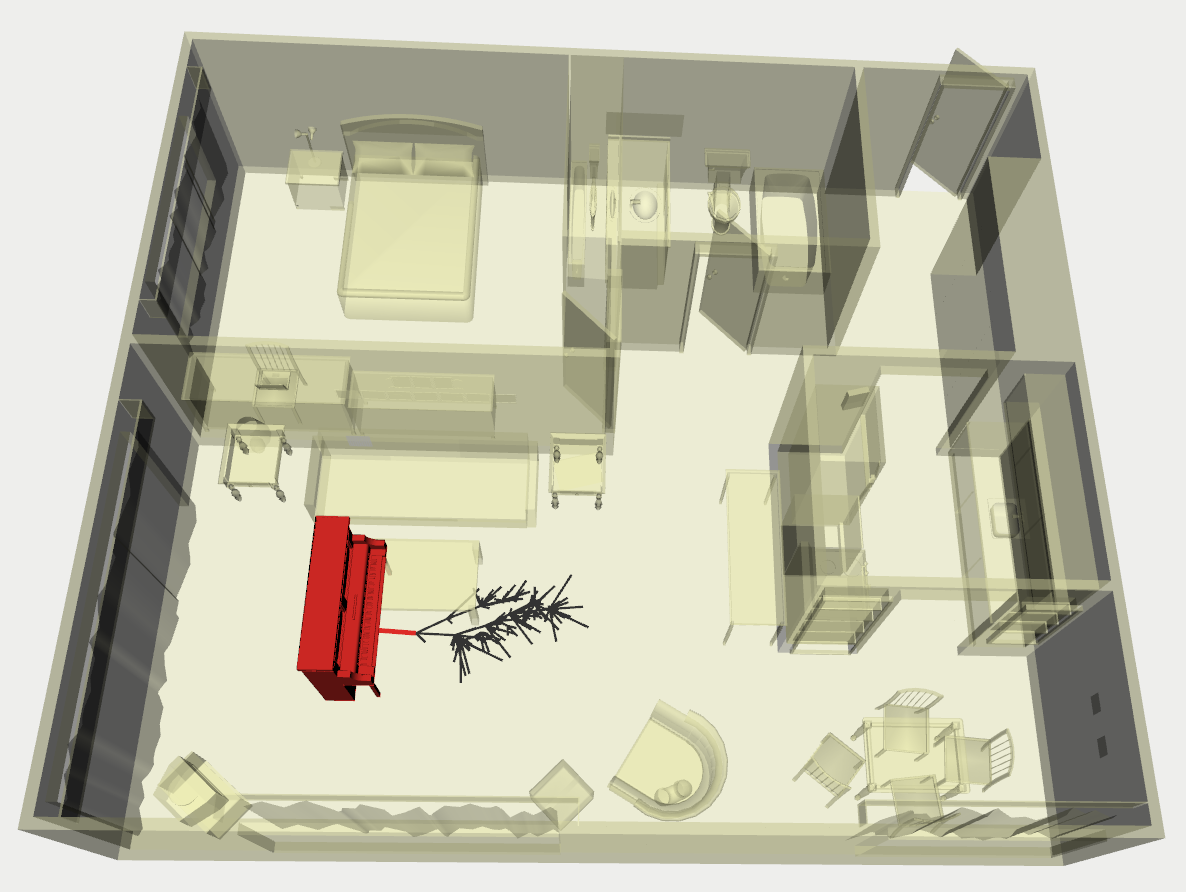}
  \subcaption{}
\end{subfigure}
\begin{subfigure}[b]{0.32\textwidth}
  \centering
  \includegraphics[width=0.7\textwidth]{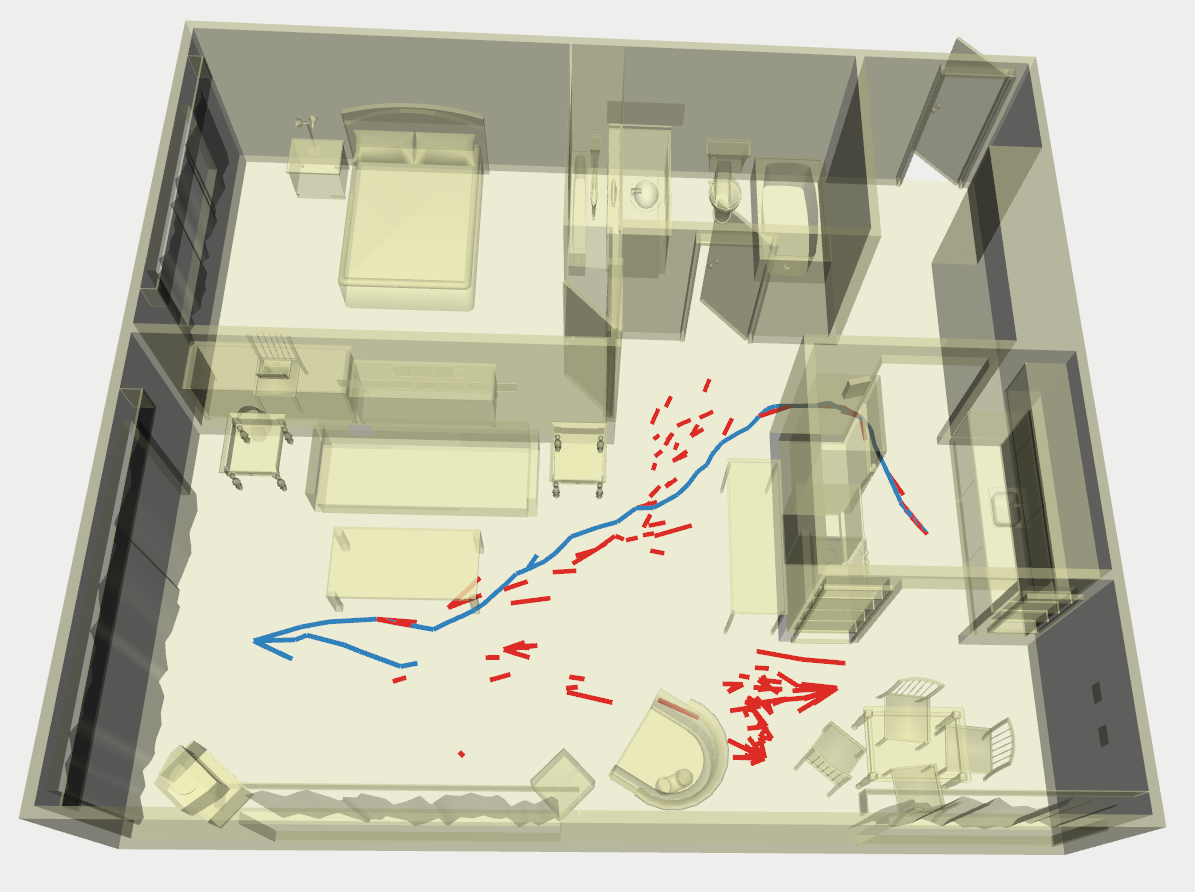}
  \subcaption{}
\end{subfigure}
\caption{(a) Edge priors: darker edges have higher prior. (d) Solution path on the graph. Second and Third columns visualize the search and evaluation by \lazySP{}(top) and \glrastar{}(\eventPathExistence{}) (bottom). (b) and (e): subtree of vertices rewired in the first iteration (12,495 and 1235 resp. at termination). (c) and (f): edges evaluated at termination (63 and 171 resp.).
}
\label{fig:piano} 
\end{figure*}

\begin{figure}[!t]
\centering
\includegraphics[width=\columnwidth]{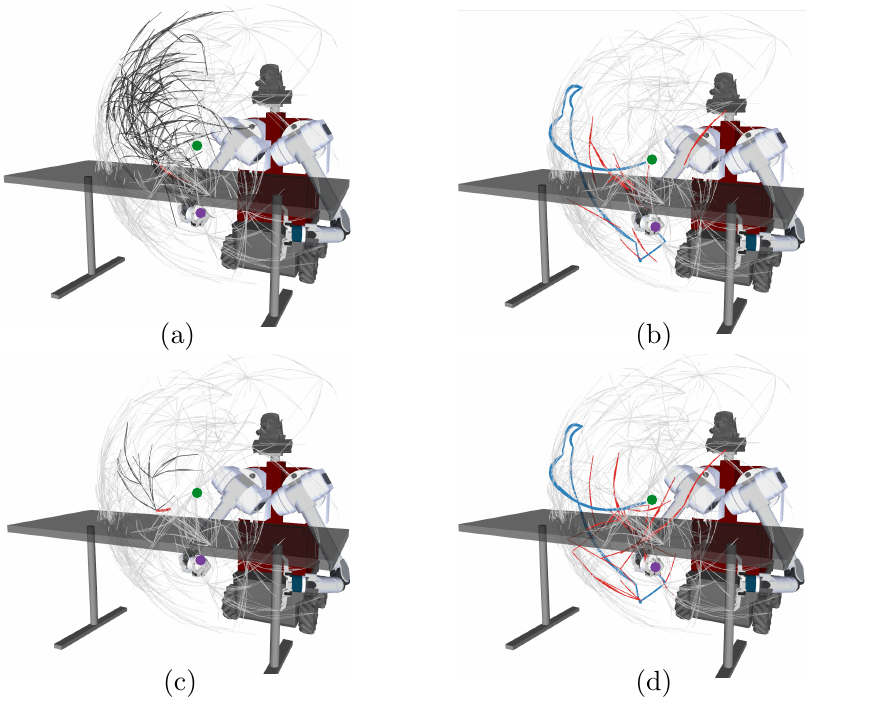}
\caption{Search and evaluation by \lazySP (top) and \glrastar{}(\eventPathExistence) (bottom). (a), (c): subtree of vertices rewired in the first iteration (21,178; 11,342 resp. at termination). (b), (d): edges evaluated (136; 243 resp.).}
\label{fig:herb_scene}
\end{figure}

We consider the Piano Movers' problem in $SE(2)$ from the Apartment scenario in OMPL \cite{sucan2012the-open-motion-planning-library}. For the $\R^7$ environment, we consider two manipulation tasks with a 7-DoF arm \cite{Srinivasa2009} in a cluttered kitchen environment (Fig. \ref{fig:herb_scene_actual}). We used graphs of 8000 vertices and 30,000 vertices for the $SE(2)$ and $\R^7$ problems respectively.

In Table \ref{tab:high_problems}, we report the mean planning times across 100 problems each. We see that \glrastar{}(\eventPathExistence{},~\selectorGreedy{}) outperforms the other algorithms in planning time on all three problems. Additionally, Fig.~\ref{fig:performance:bar_plot} shows a breakdown of the planning time for each of the three events on \emph{HERB Task 2}. \glrastar{} significantly lowers rewiring time while having a minimal increase in evaluation time.

Figures \ref{fig:piano},~\ref{fig:herb_scene} compare the performance of \lazySP{} and \glrastar{} with \selectorGreedy{} selector. They illustrate the savings of \glrastar{} on the Piano Movers' problem (Fig.~\ref{fig:piano}) and on a simplified manipulation scene (Fig.~\ref{fig:herb_scene}). In both cases, \lazySP{} has to rewire a large search tree everytime a path is found to be in collision. \glrastar{}, on the other hand, halts the search as soon as it enters a region of low probability, eliminates the paths and hence drastically minimizes rewiring time at the cost of few additional edge evaluations over \lazySP.

\section{Discussion}
\label{sec:discussion}

We presented a general framework for lazy search (\glrastar). The staple framework interleaves two phases, search and evaluation. 
In the search phase, it extends a lazy shortest-path tree forward without evaluating any edges until an \Event is triggered. 
It then switches to evaluation phase. 
It finds the shortest subpath to a leaf node of the tree and invokes a \Selector to evaluate an edge on it. 
Careful choice of \Event and \Selector allows the balance of search effort with edge evaluation to minimize overall planning time.

The framework, quite expressive, lets us capture a range of lazy search algorithms (Table~\ref{tab:equivalence}). While it draws inspiration from prior work interleaving search and evaluation, such as \lrastar~\cite{Mandalika18}, the key difference lies in our definition of the \Event, which makes the algorithm \emph{adaptive}. This lets us derive new algorithms that are edge optimal while saving on search effort (Theorem~\ref{theorem:optimal}).



In future work, we plan to examine more sophisticated \Selector policies~\cite{SC18} that exploit correlations amongst edges to minimize evaluation cost. We also plan to extend \glrastar to an anytime paradigm; this would let us use heuristics that exploit edge priors to guide the search through regions of high probability~\cite{nielsen2000two}, for significant speed-ups. Finally, we plan to explore problems where multiple lazy estimates of weight functions are available, e.g., in kinodynamic planning, where different relaxations of the boundary value problem can be obtained. We believe \glrastar can interleave search efficiently over multiple resolutions of approximation.





\fontsize{9pt}{10pt}
\selectfont

\bibliographystyle{aaai}
\bibliography{bibliography}

\end{document}